\documentclass[11pt]{article}

\usepackage[final]{acl}
\usepackage{times}
\usepackage{latexsym}
\usepackage{amsmath}
\usepackage{amssymb}
\usepackage{booktabs}
\usepackage{xcolor}
\usepackage{colortbl} 
\usepackage{tcolorbox}
\usepackage{algorithm}
\usepackage{algpseudocode} 
\usepackage{multirow}
\usepackage{amsthm}
\newtheorem{theorem}{Theorem}
\newtheorem{definition}{Definition}
\newtheorem{lemma}{Lemma}
\newtheorem{remark}{Remark}
\usepackage{enumitem}
\usepackage[T1]{fontenc}
\usepackage[utf8]{inputenc}
\usepackage{microtype}
\usepackage{inconsolata}
\usepackage{graphicx}
\usepackage{subfigure}

\title{How to Allocate, How to Learn? Dynamic Rollout Allocation and Advantage Modulation for Policy Optimization}

\author{
    Yangyi Fang\textsuperscript{1,2,*}, 
    Jiaye Lin\textsuperscript{1,*}, 
    Xiaoliang Fu\textsuperscript{1,3,*}, 
    Cong Qin\textsuperscript{1,4}, 
    Haolin Shi\textsuperscript{2}, \\
    \textbf{Chaowen Hu}\textsuperscript{1}, 
    \textbf{Lu Pan}\textsuperscript{1}, 
    \textbf{Ke Zeng}\textsuperscript{1}, 
    \textbf{Xunliang Cai}\textsuperscript{1,$\dagger$} \\
    \textsuperscript{1}Meituan \quad
    \textsuperscript{2}Tsinghua University \quad
    \textsuperscript{3}Fudan University \quad
    \textsuperscript{4}Peking University \\
    \texttt{\{fangyangyi, linjiaye\}@meituan.com}
}

\begin{document}

\maketitle

\begingroup
  \renewcommand\thefootnote{}
  \footnotetext{
    \textsuperscript{*}\ Equal contribution. \quad 
    \textsuperscript{$\dagger$}\ Corresponding author.
  }
\endgroup

\begin{abstract}
Reinforcement Learning with Verifiable Rewards (RLVR) has proven effective for Large Language Model (LLM) reasoning, yet current methods face key challenges in resource allocation and policy optimization dynamics: (i) uniform rollout allocation ignores gradient variance heterogeneity across problems, and (ii) the softmax policy structure causes gradient attenuation for high-confidence correct actions, while excessive gradient updates may destabilize training. Therefore, we propose \textbf{DynaMO}, a theoretically-grounded dual-pronged optimization framework. \textit{At the sequence level}, we prove that uniform allocation is suboptimal and derive variance-minimizing allocation from the first principle, establishing Bernoulli variance as a computable proxy for gradient informativeness. \textit{At the token level}, we develop gradient-aware advantage modulation grounded in theoretical analysis of gradient magnitude bounds. Our framework compensates for gradient attenuation of high-confidence correct actions while utilizing entropy changes as computable indicators to stabilize excessive update magnitudes. Extensive experiments conducted on a diverse range of mathematical reasoning benchmarks demonstrate consistent improvements over strong RLVR baselines. Our implementation is available at: \href{https://github.com/FlyTune/DynaMO-RL}{GithubX-F/DynaMO-RL}.
\end{abstract}

\section{Introduction}
\label{sec:intro}
Reinforcement Learning with Verifiable Rewards (RLVR) has recently emerged as a powerful and promising paradigm for advancing Large Language Model (LLM) reasoning~\cite{ouyang2022training, bai2022training}. Recent breakthrough models, such as OpenAI o1~\cite{jaech2024openai} and DeepSeek-R1~\cite{deepseekai2025deepseekr1incentivizingreasoningcapability}, demonstrate emergent capabilities like long-form chain-of-thought and self-reflection. Building on these successes, numerous works have explored RLVR methods~\cite{shao2024deepseekmath, fang2026placingpuzzlepiecesmatter}, most commonly combining the GRPO algorithm~\cite{shao2024deepseekmath} or its variants~\cite{liu2025understanding, yu2025dapo} with outcome-based rewards for reinforcement learning. However, despite these advances, fundamental challenges persist in both computational resource allocation and policy optimization dynamics.

Current RLVR methods uniformly distribute rollout budgets across training instances~\cite{shao2024deepseekmath}, ignoring heterogeneous gradient informativeness among various problems. This overlooks a fundamental trade-off: while high-variance problems contribute more informative learning signals, they simultaneously introduce greater estimation noise that may destabilizes model performance. Existing adaptive strategies either prioritize sample selection over resource allocation~\cite{bengio2009curriculum, schaul2015prioritized, tong2024dart} or target different optimization frameworks~\cite{dong2023raft, yao2025optimizing}, leaving the variance-informativeness trade-off in policy gradient methods unaddressed. Moreover, this limitation is particularly pronounced on datasets with diverse difficulties, where effective allocation requires dynamically balancing informativeness and noise.

Compounding this resource allocation challenge, the policy gradient dynamics in RLVR training expose fundamental token-level optimization issues. Theoretical analysis shows that the mathematical structure of the softmax policy induces an inherent and problematic tension~\cite{li2025logit}: high-confidence correct actions naturally yield smaller gradient magnitudes, leading to insufficient learning signals, whereas excessive gradient updates may severely destabilize training~\cite{luo2025deepscaler}. Our analysis further reveals that gradient magnitude is provably upper-bounded by policy entropy, allowing entropy changes to serve as computable indicators of such instability. Existing approaches attempt to mitigate this issue via ratio clipping~\cite{yu2025dapo, yang2025dcpo}, sample reweighting~\cite{zhu2025surprising}, or entropy-induced advantages~\cite{cheng2025reasoning, tan2025gtpo, wang2025beyond}, but these coarse-grained interventions lack a unified theoretical grounding and even amplify rather than stabilize training fluctuations.

To address these key challenges, we propose \textbf{DynaMO}, a dual-pronged optimization framework grounded in variance minimization theory and policy gradient analysis. \textit{At the sequence level}, we derive dynamic rollout allocation that explicitly balances the informativeness-noise trade-off by gradient variance minimization specifically for policy gradient methods, establishing Bernoulli variance as a lightweight computable proxy. \textit{At the token level}, we introduce gradient-aware advantage modulation through integrated compensation and stabilization mechanisms based on our gradient-entropy analysis: compensation for gradient attenuation in high-confidence correct actions and stabilization against excessive update magnitudes by monitoring entropy changes as an indicator. Our core contributions in this paper are summarized as follows:
\begin{itemize}[leftmargin=0.5cm]
\item We prove that uniform allocation is suboptimal and subsequently derive variance-minimizing rollout allocation with a lightweight proxy.
\item We establish the gradient-entropy relationship through theoretical analysis, enabling gradient-aware advantage modulation with compensation and stabilization mechanisms.
\item Extensive experiments across six benchmarks and three LLM scales demonstrate consistent improvements, with comprehensive ablations validating each component and visualizations revealing stable optimization dynamics.
\end{itemize}

\section{Related Works}
\subsection{Reinforcement Learning for LLMs}
Reinforcement learning has emerged as a dominant paradigm for LLM post-training, with RLHF and RLVR demonstrating significant success~\cite{ouyang2022training, bai2022training, schulman2017proximal}. Recent breakthrough models, including DeepSeek-R1~\cite{deepseekai2025deepseekr1incentivizingreasoningcapability}, DeepSeekMath~\cite{shao2024deepseekmath}, OpenAI o1~\cite{jaech2024openai}, and Kimi k1.5~\cite{team2025kimi}, further demonstrate the remarkable effectiveness of RLVR on complex reasoning tasks with verifiable rewards. While subsequent works have introduced various algorithmic refinements~\cite{liu2025understanding, yu2025dapo, chu2025gpg, hu2025open, fang2026ckpro}, fundamental challenges in computational efficiency and optimization stability still persist.

\subsection{Entropy Dynamics in Policy Optimization}
Entropy regularization balances exploration and exploitation~\cite{haarnoja2018soft, mnih2016asynchronous}, yet its precise role in LLM training remains contentious~\cite{ouyang2022training, shao2024deepseekmath, yu2025dapo, chu2025gpg}. Entropy collapse~\cite{luo2025deepscaler} motivates various mitigation strategies such as ratio clipping~\cite{yu2025dapo, yang2025dcpo}, sample reweighting~\cite{zhu2025surprising}, or entropy-induced advantages~\cite{cheng2025reasoning, tan2025gtpo, wang2025beyond, li2026cso}. \citet{li2025logit} further shows that high-confidence actions yield attenuated gradient magnitudes, thereby inducing asymmetric learning dynamics. Yet existing methods lack unified theoretical grounding, and the complex interplay between gradient attenuation and entropy dynamics remains largely underexplored. Our work addresses this through gradient-aware policy update control grounded in the gradient-entropy relationship, where entropy serves as a computable and interpretable indicator of update magnitude rather than a direct optimization target.

\subsection{Sample Efficiency}
Sample efficiency is critically important for RLVR training, where generating multiple rollouts per problem incurs substantial computational cost. Standard methods employ uniform rollout budgets~\cite{shao2024deepseekmath}, largely overlooking heterogeneous gradient informativeness across problems. Curriculum learning~\cite{bengio2009curriculum} and prioritized experience replay~\cite{schaul2015prioritized} choose which problems to train on, but primarily emphasize sample ordering rather than resource allocation. Offline methods~\cite{tong2024dart} repeatedly sample until obtaining a fixed number of correct responses, lacking dynamic scheduling for online training. EM-based methods~\cite{dong2023raft, gulcehre2023reinforced, yao2025optimizing, liu2021quadrupletbert, liu2021improving, liu2022label, yu2026rfew} enhance efficiency via iterative rejection sampling, yet require gradient norm computations and target the EM framework rather than policy gradient methods. In contrast, we derive optimal rollout allocation by minimizing gradient variance for policy gradient methods, with a lightweight, gradient-free proxy based on historical success statistics.

\begin{figure*}[t]
\centering
\includegraphics[width=1\textwidth]{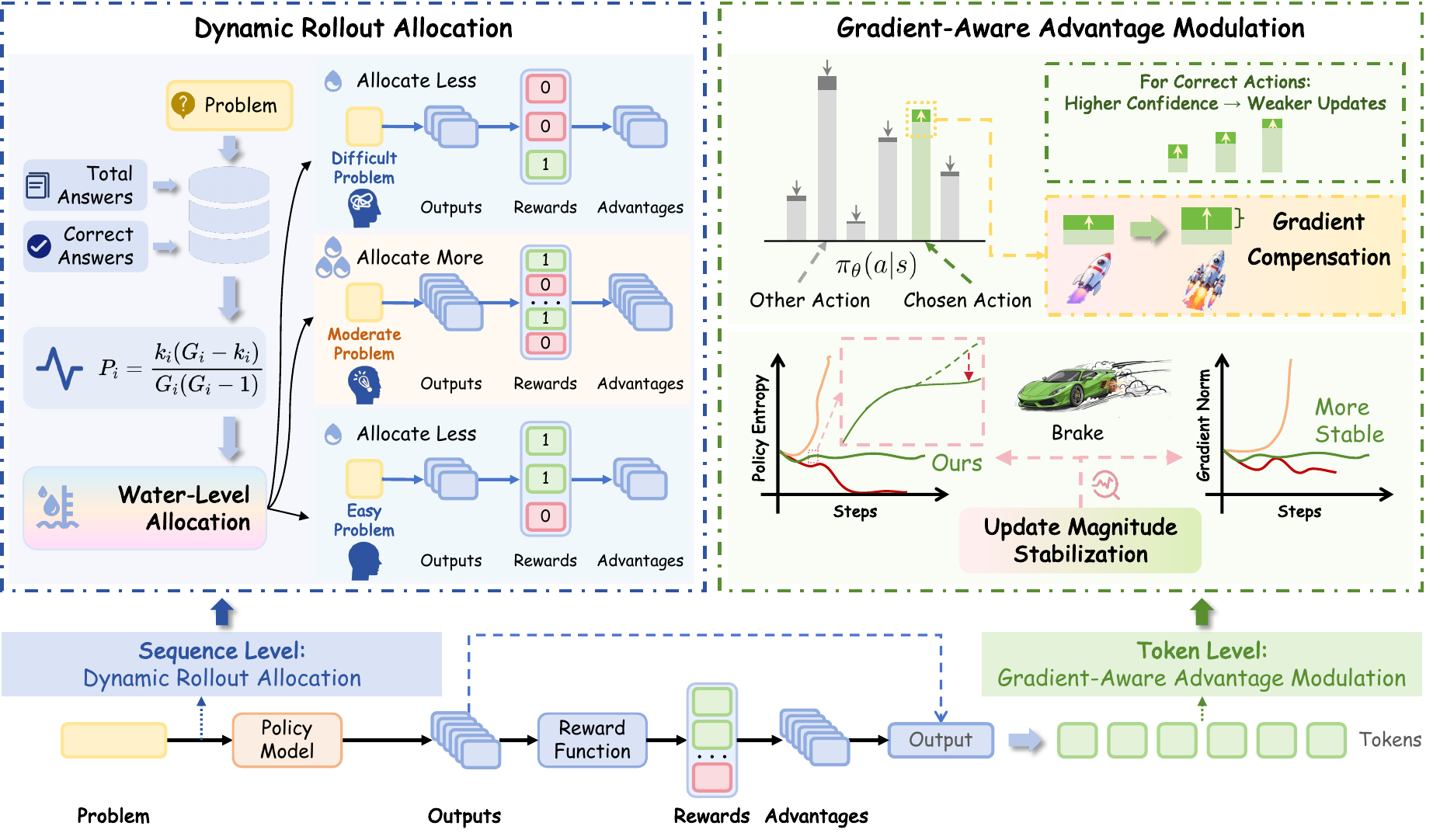}
\caption{Overview of DynaMO, which operates at both the sequence and token levels, enabling fine-grained control over optimization. (i) Left: Dynamic allocation concentrates the rollout budget on high-variance problems. (ii) Right: Gradient-aware advantage modulation compensates for attenuated gradients and stabilizes excessive updates.}
\vspace{-0.5em}
\label{fig:overview}
\end{figure*}

\section{Preliminaries}

\paragraph{Policy Optimization in RLVR.}
Formally, given a prompt $q$ sampled from the training data $\mathcal{D}$, let $\pi_{\theta}$ denotes the policy model parameterized by $\theta$. In the context of RLVR, the model autoregressively generates a response $o = \{o_1, o_2, \ldots, o_T\}$, where each token $o_t$ represents an action taken at step $t$ and $T$ represents the sequence length. The final objective is to maximize the expected reward:
\begin{equation}
\small
J(\theta) = \mathbb{E}_{q \sim \mathcal{D}, o \sim \pi_\theta(\cdot \mid q)} [R(o)],
\end{equation}
where $R(\cdot)$ is the reward function that evaluates the quality of the generated response $o$.

To reduce variance, GRPO~\cite{shao2024deepseekmath} samples $G$ responses $\{o_i\}_{i=1}^{G}$ per prompt, estimating advantages via group-wise normalization:
\begin{equation}
\small
A_{i,t} = \frac{R(o_i) - \text{mean}(\{R(o_j)\}_{j=1}^G)}{\text{std}(\{R(o_j)\}_{j=1}^G)},
\end{equation}
where $R(o_i)$ denotes the reward for response $o_i$, evaluated by the reward function $R(\cdot)$. $\text{mean}(\cdot)$ and $\text{std}(\cdot)$ denote the mean and standard deviation of the rewards within the group.

With the advantage $A_{i,t}$ shared across all tokens, GRPO maximizes the clipped surrogate objective:
\begin{equation}
\small
\begin{aligned}
\mathcal{L}(\theta) &= \mathbb{E}_{q \sim \mathcal{D}, \{o_i\}_{i=1}^{G} \sim \pi_{\text{old}}(\cdot \mid q)} \Bigg[
\frac{1}{\sum_{i=1}^{G} |o_i|} \sum_{i=1}^{G} \sum_{t=1}^{|o_i|} \\
&\quad \min \Big( r_{i,t} A_{i,t}, \text{clip}(r_{i,t}, 1-\epsilon, 1+\epsilon) A_{i,t} \Big)
\Bigg],
\end{aligned}
\end{equation}
where $r_{i,t} = \frac{\pi_\theta(o_{i,t} \mid q,o_{i,<t})}{\pi_{\text{old}}(o_{i,t} \mid q,o_{i,<t})}$ is the importance sampling ratio and $\epsilon$ is the clipping parameter. Consistent with prior works~\cite{yu2025dapo, liu2025understanding}, we omit the KL divergence penalty term and do not elaborate further on this component.

\paragraph{Policy Entropy.}
Policy entropy quantifies the uncertainty in the model's action selection process, serving as a key indicator of exploration in reinforcement learning. Given a policy model $\pi_\theta$ and training data $\mathcal{D}$, the policy entropy is defined as:
\begin{equation}
\small
\begin{aligned}
\mathcal{H}(\pi_\theta, \mathcal{D}) &= \mathbb{E}_{q \sim \mathcal{D}, \{o_i\}_{i=1}^{G} \sim \pi_\theta(\cdot \mid q)} \left[ \frac{1}{\sum_{i=1}^{G} |o_i|} \sum_{i=1}^{G} \sum_{t=1}^{|o_i|} \right. \\
&\quad \left. \left( -\sum_{v \in \mathcal{V}} \pi_\theta(v|q,o_{i,<t}) \log \pi_\theta(v|q,o_{i,<t}) \right) \right],
\end{aligned}
\end{equation}
where $\mathcal{V}$ represents the vocabulary. This entropy metric reflects the model's confidence distribution: higher values indicate greater uncertainty and exploration potential, whereas lower values suggest more deterministic behavior selection.

\section{Methodology}
\subsection{Overview}
To effectively address the challenges of inefficient resource allocation and policy optimization dynamics highlighted in Section~\ref{sec:intro}, we propose \textbf{DynaMO} (\textbf{Dyna}mic Rollout Allocation and Advantage \textbf{M}odulation for Policy \textbf{O}ptimization). As illustrated in Figure~\ref{fig:overview}, this dual-pronged framework operates at both the sequence and token levels, enabling fine-grained control over optimization.

Notably, DynaMO comprises two key components: (i) dynamic rollout allocation that adaptively distributes computational budget based on gradient variance minimization, concentrating resources on problems with balanced success-failure distributions where learning signals are most informative, and (ii) gradient-aware advantage modulation based on the gradient-entropy upper bound relationship, which compensates for gradient attenuation in high-confidence actions while using entropy changes to stabilize excessive updates.

\subsection{Dynamic Rollout Allocation}
\label{sec:dynamic_allocation}
We establish a theoretical framework for optimal rollout allocation by formulating it as a gradient variance minimization problem, which considers a training dataset $\mathcal{D}$ with $N$ prompts $\{q_i\}_{i=1}^N$.

\subsubsection{Optimization Theory}
For a prompt $q_i$ with $n_i$ rollout budget, the corresponding gradient estimator $\hat{g}_i = \frac{1}{n_i}\sum_{k=1}^{n_i} g_{i,k}$ has variance $\text{Var}[\hat{g}_i] = \sigma_i^2 / n_i$. Minimizing the total estimation variance $\sum_i \text{Var}[\hat{g}_i]$ under budget constraint $\sum_i n_i = B$ yields (proof in Appendix~\ref{sec:gradient_variance}):

\begin{equation}
\small
n_i^* = B \cdot \frac{\sigma_i}{\sum_{k=1}^N \sigma_k}.
\end{equation}

This principle allocates more rollouts to problems with higher gradient variance. To implement this, we use Bernoulli variance $P_i$ as a practical proxy for $\sigma_i$ (derivation in Appendix~\ref{sec:gradient_variance}).

\subsubsection{Bernoulli Variance as Practical Proxy}
For binary rewards, the variance follows $p(1-p)$, where $p$ represents the success probability. Then, we estimate this via historical statistics with $k_i$ correct responses out of $G_i$ total rollouts generated:
\begin{equation}
\small
P_i = \frac{k_i(G_i - k_i)}{G_i(G_i - 1)}, \quad \mathbb{E}[P_i] = p_i(1-p_i).
\end{equation}

This unbiased estimator can be efficiently computed from historical success counts, with problems that maximize $P_i$ (characterized by balanced correct/incorrect responses) exhibiting large $\sigma_i$.

\paragraph{Water-Level Implementation.}
Algorithm~\ref{alg:dynamic_allocation} implements the proportional budget allocation $n_i \propto P_i$ with the boundary constraints $G_{\min}$ (ensuring minimum coverage) and $G_{\max}$ (preventing over-concentration). After each training iteration, we incrementally update the statistics via $G_i \leftarrow G_i + G_i^{\text{new}}$ and $k_i \leftarrow k_i + k_i^{\text{new}}$, thereby enabling an adaptive allocation as model proficiency evolves. Notably, the dynamic allocation strategy provably reduces variance against the conventional uniform allocation, with theoretical justification and operational details in Appendix~\ref{sec:gradient_variance} and~\ref{sec:DRA_detail}.

\begin{algorithm}[t]
\small
\caption{Variance-Driven Dynamic Allocation}
\label{alg:dynamic_allocation}
\begin{algorithmic}[1]
\Require Historical statistics $\{(G_i, k_i)\}_{i=1}^N$, total rollout budget $B$, allocation bounds $[G_{\min}, G_{\max}]$
\State Compute priorities $P_i = k_i(G_i - k_i) / [G_i(G_i - 1)]$ 
\State Initialize $G_i^{\text{new}} = G_{\min}$, $B_{\text{rem}} = B - N \cdot G_{\min}$
\While{$B_{\text{rem}} > 0$ and $\exists i: G_i^{\text{new}} < G_{\max}$}
    \State $\mathcal{E} = \{i : G_i^{\text{new}} < G_{\max}\}$
    \For{$i \in \mathcal{E}$}
        \State $\Delta G_i = \min\left(\left\lfloor \frac{B_{\text{rem}} P_i}{\sum_{j \in \mathcal{E}} P_j} \right\rfloor, G_{\max} - G_i^{\text{new}}\right)$
        \State $G_i^{\text{new}} \leftarrow G_i^{\text{new}} + \Delta G_i$
        \State $B_{\text{rem}} \leftarrow B_{\text{rem}} - \Delta G_i$
    \EndFor
\EndWhile
\State \Return $\{G_1^{\text{new}}, \ldots, G_N^{\text{new}}\}$
\end{algorithmic}
\end{algorithm}

\subsection{Gradient-Aware Advantage Modulation}
Beyond resource allocation, the softmax policy structure causes high-confidence correct actions to produce attenuated gradient magnitudes, while excessive gradient updates may undermine training stability. Grounded in theoretical analysis of gradient dynamics, we design an integrated compensation and stabilization mechanism.

\subsubsection{Gradient Compensation}
Our gradient analysis, detailed in Appendix~\ref{sec:grad_bound}, reveals the fundamental relationship between the update magnitude and token-level entropy. Specifically, considering a softmax policy parameterized by logits $z(s)$ at state $s = (q, o_{<t})$, the advantage-weighted updates with learning rate $\eta$ satisfy:
\begin{equation}
\small
\begin{aligned}
\mathbb{E}_{a_k \sim \pi_\theta(\cdot|s)}\big[\|\Delta z(s)\|_2^2\big] 
= \eta^2\, \mathbb{E}[A^2] \Big( 1 - \sum_{k=1}^{|\mathcal{V}|} \pi_k^2 \Big) \\
\leq \eta^2\, \mathbb{E}[A^2] \Big( 1 - \exp\big(-\mathcal{H}(\pi_\theta|s)\big) \Big),
\end{aligned}
\end{equation}
where $\mathcal{H}(\pi_\theta|s) = -\sum_{v \in \mathcal{V}} \pi_\theta(v|s) \log \pi_\theta(v|s)$ is the token-level entropy and $\sum_{k=1}^{|\mathcal{V}|} \pi_k^2$ measures policy concentration. Accordingly, high-confidence tokens ($\pi_k \approx 1$) produce minimal expected update magnitudes ($\sum_{j=1}^{|\mathcal{V}|} \pi_j^2 \approx 1$), resulting in gradient attenuation for confident correct actions—an observation that constitutes the theoretical basis for gradient compensation in our method.

To effectively mitigate this gradient attenuation phenomenon for the confident correct actions, we introduce a gradient compensation factor:
\begin{equation}
\small
\beta_{i,t}^{\text{comp}} = \mathbb{I}[A_{i,t} > 0] \cdot g(\mathcal{H}_{i,t}) + \mathbb{I}[A_{i,t} \leq 0],
\end{equation}
where the compensation function $g(\cdot)$ is designed to scale inversely with the entropy:
\begin{equation}
\small
g(\mathcal{H}) = 1 + \alpha \cdot \frac{\mathcal{H}_{\max} - \mathcal{H}}{\mathcal{H}_{\max} - \mathcal{H}_{\min}}.
\end{equation}
Here, $\alpha$ determines the maximum compensation factor, while $\mathcal{H}_{\min}$ and $\mathcal{H}_{\max}$ represent the minimum and maximum token-level entropy within the current training batch. This asymmetric design targets the confidence–update-magnitude asymmetry: for positive-advantage tokens, the compensation function scales inversely with entropy to counteract gradient attenuation, moderately amplifying learning signals for high-confidence correct actions; for negative-advantage tokens, compensation is bypassed to preserve the natural penalty signals. Furthermore, for uncertain actions with high entropy, the function returns to unity, maintaining natural update dynamics.

\subsubsection{Update Magnitude Stabilization}
While gradient compensation addresses the attenuation for high-confidence actions, excessive gradient updates still may destabilize the training dynamics. By leveraging the gradient-entropy relationship established above, we utilize entropy changes as an indicator to detect such optimization instability. Specifically, our policy update decomposition reveals that changes in logits induce corresponding entropy changes through a factorized form:
\begin{equation}
\small
\Delta \mathcal{H}(\pi_\theta^k|s) \approx -\eta \sum_{a} \pi_\theta^k(a|s)^2 \cdot \Lambda_\theta^k(a|s) \cdot \xi_{i,t}(a),
\end{equation}
where $\Lambda_\theta^k(a|s)$ denotes the centered log-probability (Definition~\ref{def:centered_logprob}) that captures the deviation of action $a$'s log-probability from the policy's average entropy. $\xi_{i,t}(a) = \mathbb{I}_{\text{clip}} \cdot r_{i,t} \cdot A_{i,t}$ represents the composite update coefficient combining clipping, importance sampling, and advantage estimation. The complete derivation is provided in Appendix~\ref{sec:entropy_derivation}.

This decomposition shows that when both the policy concentration $\pi_\theta^k(a|s)^2$ and the composite coefficient magnitude $|\xi_{i,t}(a)|$ are large, entropy changes become substantial. Building upon the established gradient-entropy relationship, such large entropy changes indicate excessive gradient magnitudes that may destabilize training. Accordingly, we define the token-level instability indicator:
\begin{equation}
\small
\Xi_{i,t} = \left| \Delta \mathcal{H}(\pi_\theta^k|s_{i,t}) \right|,
\end{equation}
which quantitatively measures the estimated contribution of each token to overall update instability.

To stabilize tokens exhibiting excessive entropy changes while simultaneously maintaining learning efficiency, we formulate a stabilization factor:
\begin{equation}
\small
\beta_{i,t}^{\text{stab}} = f\left( \frac{\Xi_{i,t}}{\max_j \Xi_{j,t}} \right),
\end{equation}
where $f(\cdot)$ is a sigmoid-based decay function designed to reduce the modulation factor for tokens with large normalized entropy changes:
\begin{equation}
\small
f(x) = \lambda_{\min} + (1-\lambda_{\min}) \cdot \sigma(-\gamma(x - \tau)).
\end{equation}
In this formulation, $\sigma(z) = \frac{1}{1 + \exp(-z)}$ is the sigmoid function, $\gamma > 0$ controls the transition sharpness between stable and unstable regions, $\tau \in [0,1]$ determines the entropy change threshold triggering the decay activation, and $\lambda_{\min} = 1 - \alpha$ establishes a lower bound for the stabilization factor.

\subsubsection{Integrated Advantage Modulation}
Our complete approach integrates both compensation and stabilization mechanisms through a unified advantage modulation formulation as:
\begin{equation}
\small
A_{i,t}^{\text{final}} = A_{i,t} \cdot \beta_{i,t}^{\text{comp}} \cdot \beta_{i,t}^{\text{stab}}.
\end{equation}
Subsequently, the training objective incorporates these modulated advantages:
\begin{equation}
\small
\begin{aligned}
\mathcal{L}_{\text{DynaMO}}(\theta) 
&= \mathbb{E}_{q \sim \mathcal{D}, \{o_i\}_{i=1}^{n} \sim \pi_{\text{old}}(\cdot \mid q)} \Bigg[
\frac{1}{\sum_{i=1}^{n} |o_i|} \sum_{i=1}^{n} \sum_{t=1}^{|o_i|} \\
&\quad \min \Big( r_{i,t} A_{i,t}^{\text{final}}, 
\text{clip}(r_{i,t}, 1-\epsilon, 1+\epsilon) A_{i,t}^{\text{final}} \Big)
\Bigg],
\end{aligned}
\end{equation}
where $n$ denotes the rollout budget allocated to prompt $q$ via dynamic allocation, with the prompt subscript omitted for notational brevity.

This integrated approach addresses both challenges, leveraging the gradient-entropy relationship: the compensation mechanism maintains sufficient learning signals for high-confidence positive actions, while the stabilization mechanism prevents training instability by dampening tokens with excessive entropy changes that indicate large gradient magnitudes, with both mechanisms unified through a single hyperparameter $\alpha$ for simplified tuning.

\begin{table*}[t]
\small
\centering
{
\setlength{\tabcolsep}{3.5pt}
\renewcommand{\arraystretch}{1.23}
\caption{Comparison of benchmark results across Qwen2.5-Math-1.5B and Qwen2.5-Math-7B. Pass@K (\%) is abbreviated as P@K. The best results are bold, and the second-best results are underlined, respectively.}
\vspace{-0.3em}
\begin{tabular}{lcccccccccccccc}
\toprule
\multirow{2}{*}[-3pt]{\textbf{Method}} & \multicolumn{2}{c}{\textbf{AIME24}} & \multicolumn{2}{c}{\textbf{AIME25}} & \multicolumn{2}{c}{\textbf{AMC23}} & \multicolumn{2}{c}{\textbf{MATH500}} & \multicolumn{2}{c}{\textbf{Minerva}} & \multicolumn{2}{c}{\textbf{Olympiad}} & \multicolumn{2}{c}{\textbf{Avg.}} \\
\cmidrule(lr){2-3} \cmidrule(lr){4-5} \cmidrule(lr){6-7} \cmidrule(lr){8-9} \cmidrule(lr){10-11} \cmidrule(lr){12-13} \cmidrule(lr){14-15}
& P@1 & P@32 & P@1 & P@32 & P@1 & P@32 & P@1 & P@32 & P@1 & P@32 & P@1 & P@32 & P@1 & P@32 \\
\midrule
\multicolumn{15}{>{\columncolor{blue!15}}c}{
\textit{\textbf{Qwen2.5-Math-1.5B}}} \\
\midrule
GRPO & 13.2 & 32.3 & 7.6 & 31.5 & 56.0 & 90.0 & 54.4 & 79.2 & 17.2 & 42.8 & 25.6 & 47.0 & 29.0 & 53.8 \\
Clip-Higher & 12.4 & 34.7 & 6.4 & 30.6 & 50.6 & 89.9 & 56.8 & \underline{80.2} & 16.8 & 41.3 & \underline{26.4} & 46.8 & 28.2 & 53.9 \\
Entropy Loss & 12.6 & 33.7 & 5.8 & 28.4 & 55.6 & 86.9 & 56.3 & 78.5 & 17.6 & 43.6 & 25.4 & 46.4 & 28.9 & 52.9 \\
Fork Tokens & 9.4 & 32.0 & 5.9 & 31.4 & 52.5 & 85.6 & 54.3 & 74.2 & 16.6 & 36.8 & 25.5 & 45.2 & 27.4 & 50.9 \\
Entropy Advantages & \underline{15.7} & 35.8 & 8.9 & \underline{33.4} & 62.0 & 86.4 & \textbf{59.7} & 76.2 & \underline{18.2} & 43.0 & 25.9 & 44.9 & \underline{31.7} & 53.3 \\
Clip-COV & 13.5 & \underline{36.4} & 6.6 & \textbf{34.4} & 59.5 & 89.7 & 57.6 & 75.6 & 15.8 & \textbf{44.3} & 25.8 & \underline{47.6} & 29.8 & \underline{54.7} \\
KL-COV & 12.6 & 33.9 & \underline{9.0} & \underline{33.4} & 55.8 & \underline{91.3} & 54.2 & 78.1 & 14.8 & 40.3 & 25.4 & \textbf{48.1} & 28.6 & 54.2 \\
W-REINFORCE & 15.3 & 35.3 & 8.5 & 31.7 & \underline{63.0} & 85.7 & 56.7 & 77.7 & \underline{18.2} & 40.3 & 24.4 & 46.2 & 31.0 & 52.8 \\
\rowcolor{teal!8} \textbf{DynaMO (Ours)} & \textcolor{teal}{\textbf{17.2}} & \textcolor{teal}{\textbf{37.2}} & \textcolor{teal}{\textbf{9.8}} & 32.5 & \textcolor{teal}{\textbf{63.6}} & \textcolor{teal}{\textbf{91.9}} & \underline{58.8} & \textcolor{teal}{\textbf{81.0}} & \textcolor{teal}{\textbf{19.4}} & \underline{44.0} & \textcolor{teal}{\textbf{27.2}} & 47.1 & \textcolor{teal}{\textbf{32.7}} & \textcolor{teal}{\textbf{55.6}} \\
\midrule
\multicolumn{15}{>{\columncolor{orange!15}}c}{\textit{\textbf{Qwen2.5-Math-7B}}} \\
\midrule
GRPO & 28.8 & 52.5 & 11.7 & 34.8 & 68.3 & \underline{90.8} & 63.3 & 75.0 & 22.6 & 45.4 & 28.6 & 44.7 & 37.2 & 57.2 \\
Clip-Higher & 27.0 & 51.9 & 12.1 & 39.5 & 67.8 & 89.9 & 64.2 & \underline{83.6} & 24.0 & 46.1 & 28.1 & 46.3 & 37.2 & 59.6 \\
Entropy Loss & 30.6 & 54.6 & 13.2 & 40.6 & 66.0 & 87.0 & 60.6 & 79.6 & 23.3 & 45.9 & 30.2 & 41.1 & 37.3 & 58.1 \\
Fork Tokens & 27.1 & 52.5 & 13.4 & \underline{43.5} & 71.0 & 87.3 & \underline{65.8} & 79.3 & 26.1 & 42.4 & \underline{30.9} & \underline{47.3} & 39.1 & 58.7 \\
Entropy Advantages & 27.5 & 49.7 & 9.4 & 39.2 & 67.9 & 85.2 & 65.3 & 83.3 & 23.7 & 43.7 & 30.4 & \underline{47.3} & 37.4 & 58.1 \\
Clip-COV & 32.2 & 52.7 & 13.2 & 40.4 & \underline{72.7} & 89.3 & 64.3 & 76.8 & 25.4 & 45.9 & 29.5 & 44.6 & 39.5 & 58.3 \\
KL-COV & \underline{32.8} & 53.3 & 11.7 & 36.1 & 70.6 & 88.5 & 64.6 & 75.3 & 24.5 & 39.9 & 30.2 & 44.2 & 39.1 & 56.2 \\
W-REINFORCE & 31.8 & \underline{55.4} & \underline{14.3} & 41.0 & 72.5 & 89.8 & 64.9 & \textbf{84.0} & \underline{26.4} & \textbf{49.5} & \underline{30.9} & 46.7 & \underline{40.1} & \underline{61.1} \\
\rowcolor{teal!8} \textbf{DynaMO (Ours)} & \textcolor{teal}{\textbf{34.4}} & \textcolor{teal}{\textbf{59.0}} & \textcolor{teal}{\textbf{15.4}} & \textcolor{teal}{\textbf{46.8}} & \textcolor{teal}{\textbf{74.4}} & \textcolor{teal}{\textbf{92.9}} & \textcolor{teal}{\textbf{66.4}} & \textcolor{teal}{\textbf{84.0}} & \textcolor{teal}{\textbf{27.3}} & \underline{47.2} & \textcolor{teal}{\textbf{31.6}} & \textcolor{teal}{\textbf{50.1}} & \textcolor{teal}{\textbf{41.6}} & \textcolor{teal}{\textbf{63.3}} \\
\bottomrule
\label{tab:main_results}
\end{tabular}
}
\vspace{-2.2em}
\end{table*}

\section{Experiments}

\subsection{Experimental Setup}

\paragraph{Training Configuration.}
We conduct experiments on three different LLM scales: Qwen2.5-Math-1.5B, Qwen2.5-Math-7B, and Qwen3-14B. Our implementation builds upon the VeRL training framework~\cite{sheng2025hybridflow}, adapting it to incorporate our dual-pronged approach. Our training data is DAPO-Math-17k~\cite{yu2025dapo}, which consists of mathematical reasoning problems with verifiable integer answers, ensuring reliable reward signals for RLVR optimization. Training is performed with top-p sampling at p=1.0 and temperature set to 1.0 to maintain exploration diversity. More details are provided in Appendix~\ref{sec:training_config}.

\paragraph{Benchmarks and Metrics.}
We evaluate DynaMO on six mathematical reasoning benchmarks: AIME24, AIME25~\cite{AIME}, AMC23~\cite{AMC}, MATH500~\cite{hendrycks2021measuring}, Minerva~\cite{lewkowycz2022solving}, and Olympiad~\cite{he2024olympiadbench}, covering various problem difficulties and types. Detailed descriptions of these benchmarks are presented in Appendix~\ref{sec:benchmark_details}.

We report average Pass@1 and Pass@32 across three independent training runs with different random seeds, using Math-Verify~\cite{mathverify} for answer verification and top-p=1.0, temperature=1.0 for diverse generation. Following \citet{RLLimit}, Pass@K equals 1 if at least one of K sampled outputs passes verification, with unbiased estimation adopted from \citet{unbiasedpassk} to mitigate evaluation variance.

\paragraph{Baseline Methods.}
We compare DynaMO with strong baselines, including standard GRPO~\cite{shao2024deepseekmath} and various entropy intervention techniques: Clip-Higher~\cite{yu2025dapo}, Entropy Loss~\cite{cheng2025reasoning}, Fork Tokens~\cite{wang2025beyond}, Entropy Advantages~\cite{cheng2025reasoning}, coverage-based methods (Clip-COV, KL-COV)~\cite{zhu2025surprising}, and W-REINFORCE~\cite{tan2025gtpo}. All experiments use consistent hyperparameters for fair comparison.

\begin{table*}[t]
\small
\centering
\setlength{\tabcolsep}{9.7pt}
\renewcommand{\arraystretch}{1.1}
\caption{Ablation study on Qwen2.5-Math-7B with Pass@1 (\%). DRA: Dynamic Rollout Allocation, UMS: Update Magnitude Stabilization, GC: Gradient Compensation, w/o ALL: standard GRPO}
\vspace{-0.5em}
\label{tab:ablation_overall}
\begin{tabular}{lccccccc}
\toprule
\textbf{Method} & \textbf{AIME24} & \textbf{AIME25} & \textbf{AMC23} & \textbf{MATH500} & \textbf{Minerva} & \textbf{Olympiad} & \textbf{Avg.} \\
\midrule
\rowcolor{teal!8} \textbf{DynaMO} & \textcolor{teal}{\textbf{34.4}} & \textcolor{teal}{\textbf{15.4}} & \textcolor{teal}{\textbf{74.4}} & \textcolor{teal}{\textbf{66.4}} & \textcolor{teal}{\textbf{27.3}} & \textcolor{teal}{\textbf{31.6}} & \textcolor{teal}{\textbf{41.6}} \\
\midrule
- $w/o$ GC & 33.8 & 15.0 & 71.9 & 65.0 & 26.7 & 29.3 & 40.3 \\
- $w/o$ UMS & 33.2 & 14.7 & 71.9 & 64.9 & 25.9 & 30.2 & 40.1 \\
- $w/o$ DRA & 31.9 & 15.2 & 73.4 & 65.7 & 23.0 & 30.4 & 39.9 \\
\midrule
- $w/o$ GC \& DRA & 31.9 & 14.5 & 69.0 & 64.4 & 23.7 & 29.7 & 38.9 \\
- $w/o$ GC \& UMS & 30.5 & 14.3 & 70.2 & 63.5 & 22.2 & 29.4 & 38.4 \\
- $w/o$ UMS \& DRA & 30.0 & 13.8 & 70.1 & 61.6 & 19.9 & 30.2 & 37.6 \\
\midrule
- $w/o$ ALL  & 28.8 & 11.7 & 68.3 & 63.3 & 22.6 & 28.6 & 37.2 \\
\bottomrule
\end{tabular}
\vspace{-1.3em}
\end{table*}

\subsection{Main Results}
Table~\ref{tab:main_results} presents the comprehensive comparison results across six mathematical reasoning benchmarks on Qwen2.5-Math-1.5B and Qwen2.5-Math-7B. DynaMO consistently outperforms all baseline methods by effectively addressing two complementary challenges: the variance-driven rollout allocation concentrates computational budget on problems with balanced success-failure distributions where Bernoulli variance signals high gradient informativeness, while the gradient-aware advantage modulation provides compensation for gradient attenuation in high-confidence correct actions and stabilization against excessive update magnitudes signaled by large entropy changes. Furthermore, a comparison with entropy intervention baselines reveals their critical limitations: coarse-grained clipping and sequence-level reweighting methods like Clip-Higher, Clip-COV, and KL-COV lack fine-grained control over token-level dynamics, while entropy-induced advantage methods such as Entropy Advantages and W-REINFORCE introduce training instability without principled stabilization mechanisms. Results on Qwen3-14B demonstrating effective scaling are presented in Section~\ref{sec:scaling}.

\begin{figure}[t]
\vspace{-0.5em}
\subfigure{
\begin{minipage}[t]{0.48\linewidth}
\centerline{\includegraphics[width=1\hsize]{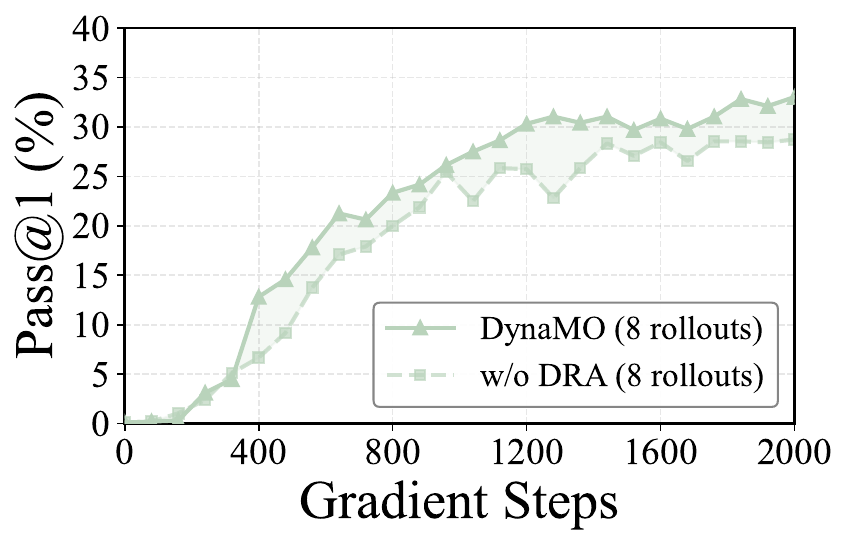}}
\vspace{4pt} 
\centerline{\includegraphics[width=1\hsize]{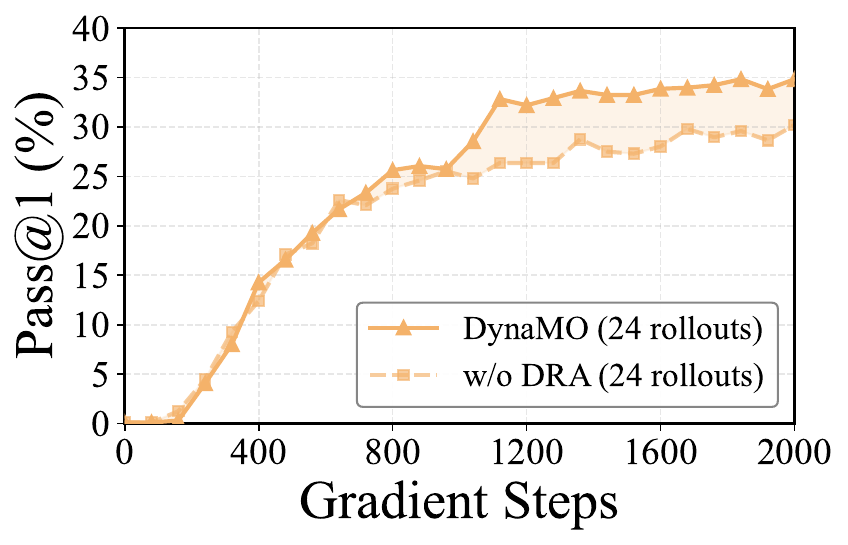}}
\vspace{4pt} 
\end{minipage}}
\subfigure{
\begin{minipage}[t]{0.48\linewidth}
\centerline{\includegraphics[width=1\hsize]{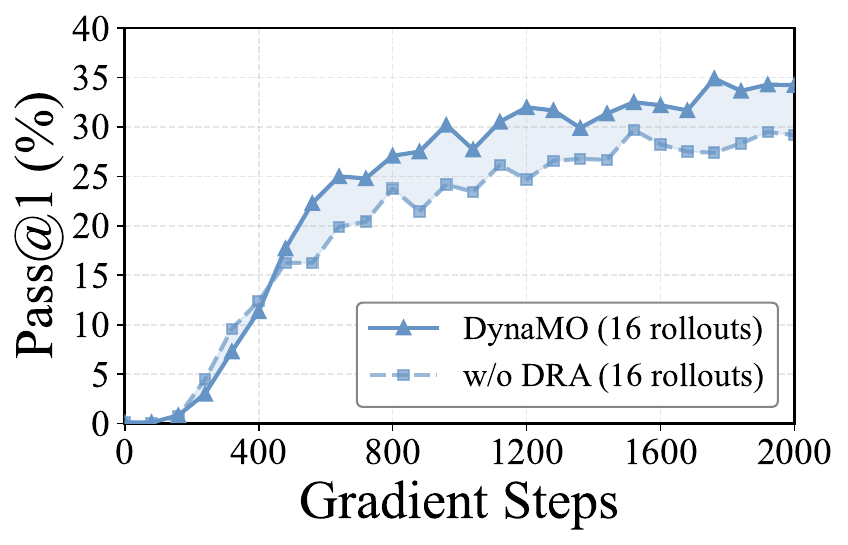}}
\vspace{4pt} 
\centerline{\includegraphics[width=1\hsize]{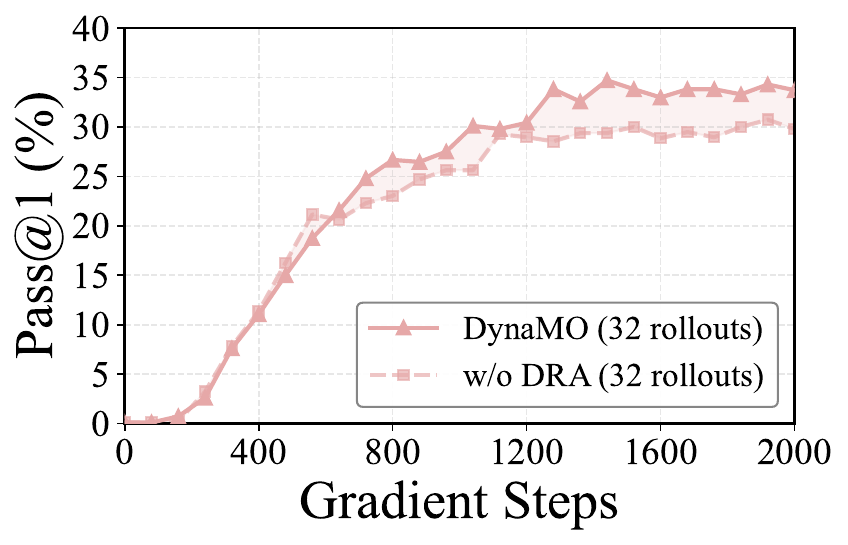}}
\vspace{4pt} 
\end{minipage}}
\vspace{-1.5em}
\caption{Impact of DRA across different computational budgets on AIME24 with average Pass@1 (\%). Solid lines denote the full DynaMO, and dashed lines are the variant that substitutes DRA with uniform allocation.}
\label{fig:budget_sensitivity}
\vspace{-1.1em}
\end{figure}

\subsection{Ablation Study}

\subsubsection{Overall Component Analysis}
Table~\ref{tab:ablation_overall} conducts systematic ablation on Qwen2.5-Math-7B. DynaMO integrates three components: Dynamic Rollout Allocation (DRA), Update Magnitude Stabilization (UMS), and Gradient Compensation (GC). Removing individual components degrades average performance, with DRA showing the largest impact on Minerva, GC on Olympiad, and UMS providing stability across benchmarks. Removing multiple components reveals synergistic effects: \textit{w/o} GC \& UMS degrades more than the sum of individual removals, indicating UMS stabilizes the amplified gradients from GC.

\begin{figure}[t]
\centerline{\includegraphics[width=0.95\linewidth]{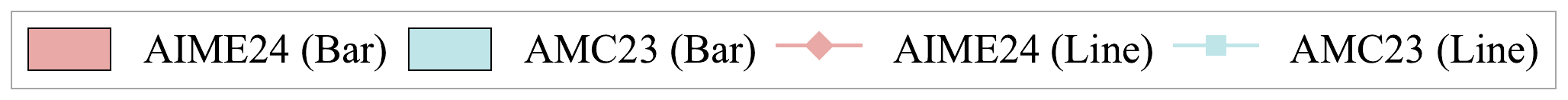}}
\vspace{-5pt}  
\subfigure{
\begin{minipage}[t]{0.48\linewidth}
\centerline{\includegraphics[width=1\hsize]{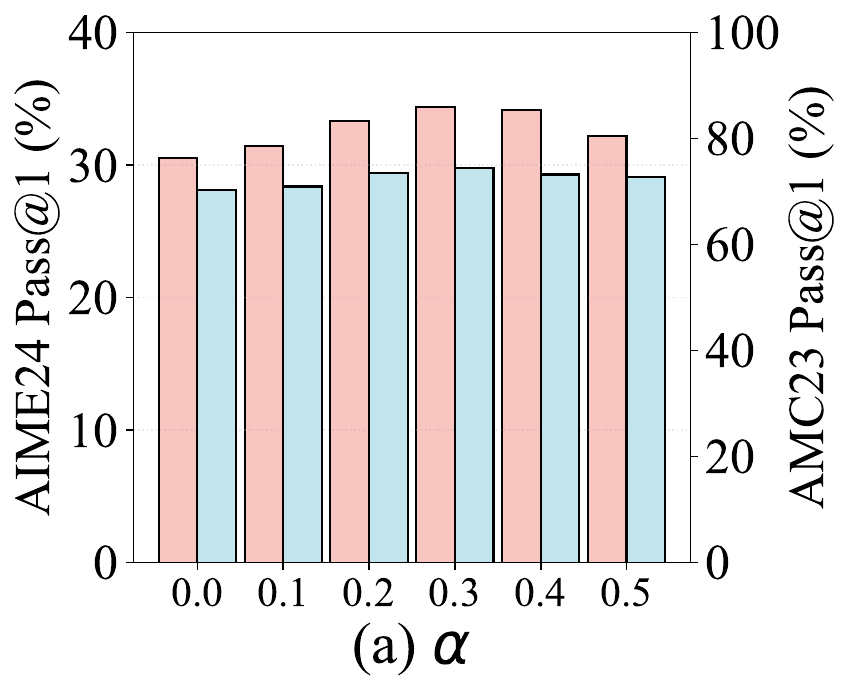}}
\vspace{1pt} 
\centerline{\includegraphics[width=1\hsize]{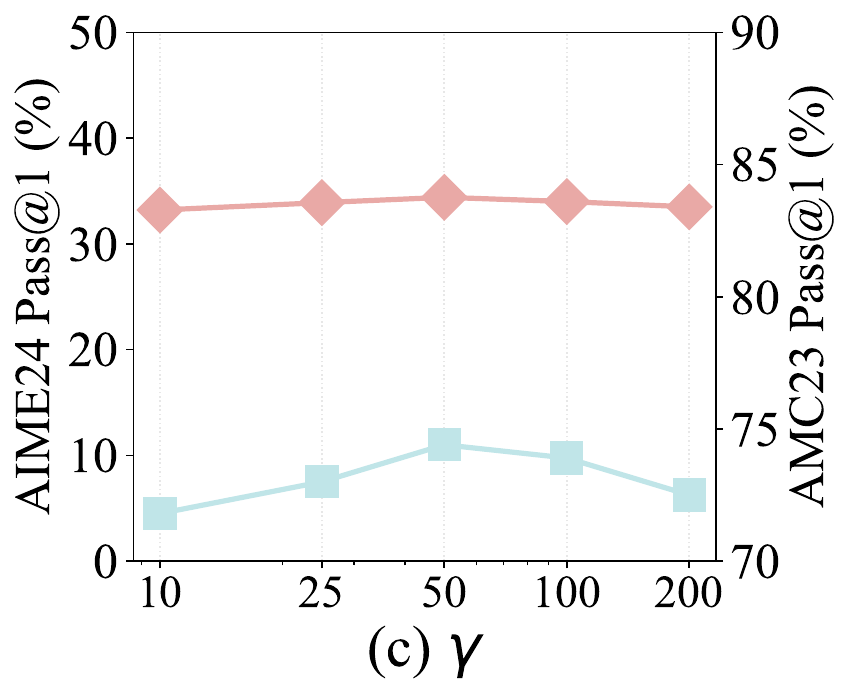}}
\vspace{1pt} 
\end{minipage}}
\subfigure{
\begin{minipage}[t]{0.48\linewidth}
\centerline{\includegraphics[width=1\hsize]{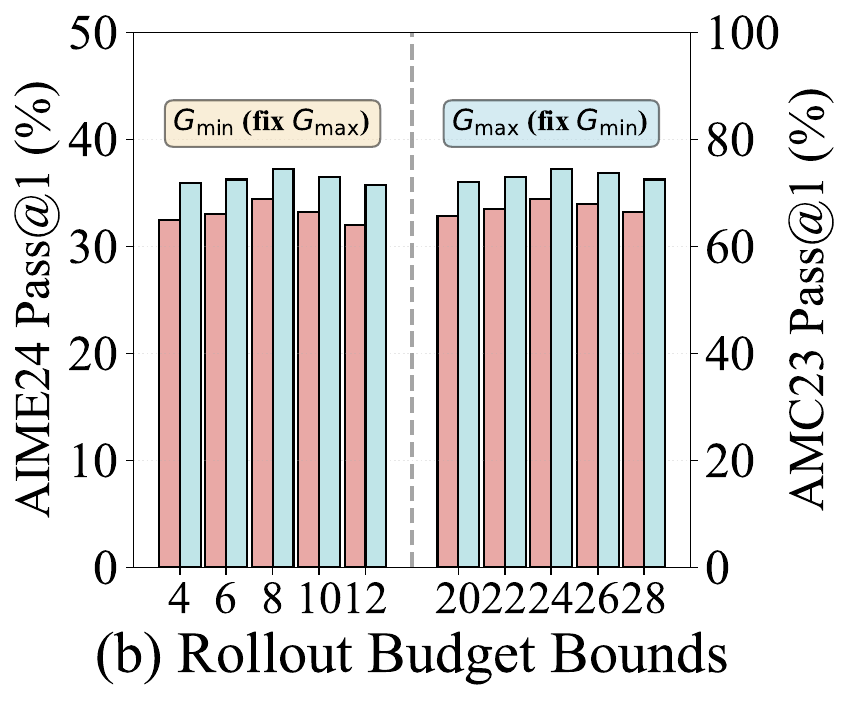}}
\vspace{2pt} 
\centerline{\includegraphics[width=1\hsize]{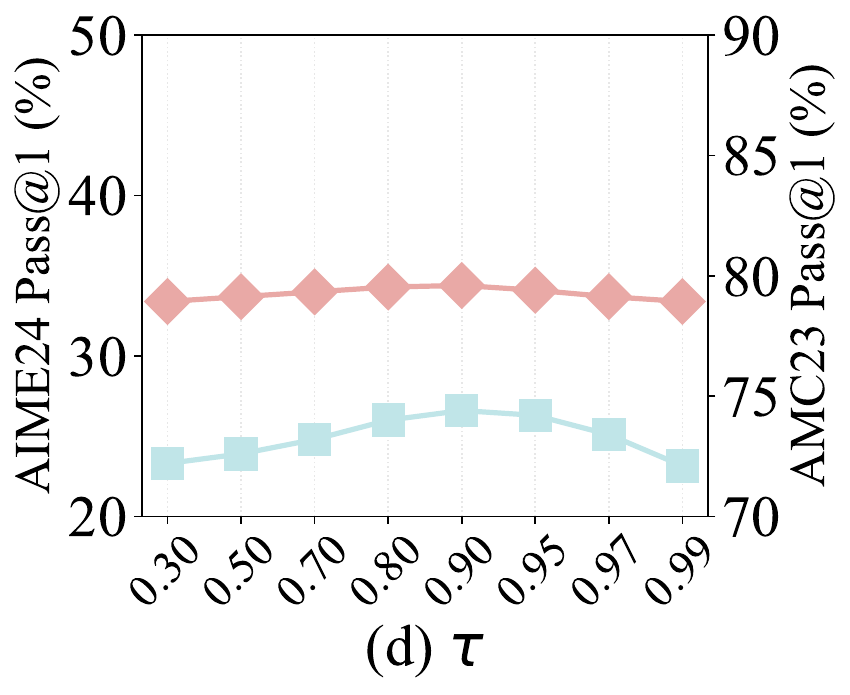}}
\vspace{2pt} 
\end{minipage}}
\vspace{-1em}
\caption{Hyperparameter sensitivity analysis on Qwen2.5-Math-7B across AIME24 and AMC23.}
\vspace{-1.1em}
\label{fig:hyperparameter_analysis}
\end{figure}

\subsubsection{Impact of Dynamic Rollout Allocation}
Figure~\ref{fig:budget_sensitivity} evaluates DRA across varying computational budgets, specifically ranging from an average of 8 to 32 rollouts per problem. The results demonstrate that DRA provides stable performance gains across all configurations. During early training, limited historical statistics result in similar allocations across problems. However, as variance data accumulates, DRA progressively concentrates the budget on problems with balanced success-failure distributions where Bernoulli variance peaks. Crucially, these problems reside within the capability gap, being neither trivially solved nor currently inaccessible, where both positive and negative advantage signals are actively generated, thereby maximizing learning signal quality per computational unit. In contrast, uniform allocation continues wasting resources on problems outside this optimal learning zone throughout training.

\subsection{Hyperparameter Analysis}
Figure~\ref{fig:hyperparameter_analysis} illustrates the hyperparameter sensitivity analysis on Qwen2.5-Math-7B across AIME24 and AMC23. The unified modulation parameter $\alpha$ exhibits an inverted-U pattern: performance degrades without modulation at $\alpha=0$, peaks at moderate values, and subsequently declines at excessive settings, validating that insufficient modulation fails to address gradient attenuation while over-intervention constrains the learning process. Allocation bounds $G_{\min}$ and $G_{\max}$ demonstrate stable performance across wide ranges, confirming variance-driven allocation avoids under-sampling issues with minimal tuning. Similarly, the stabilization sharpness $\gamma$ exhibits a flat plateau around its optimal values, balancing selective intervention against learning flexibility. Entropy threshold $\tau$ displays smooth variation characterized by a broad optimal region, thereby demonstrating the reliability in identifying problematic tokens.

\subsection{Scaling to Larger Models}
\label{sec:scaling}
\begin{figure}[t]
\centering
\includegraphics[width=1\columnwidth]{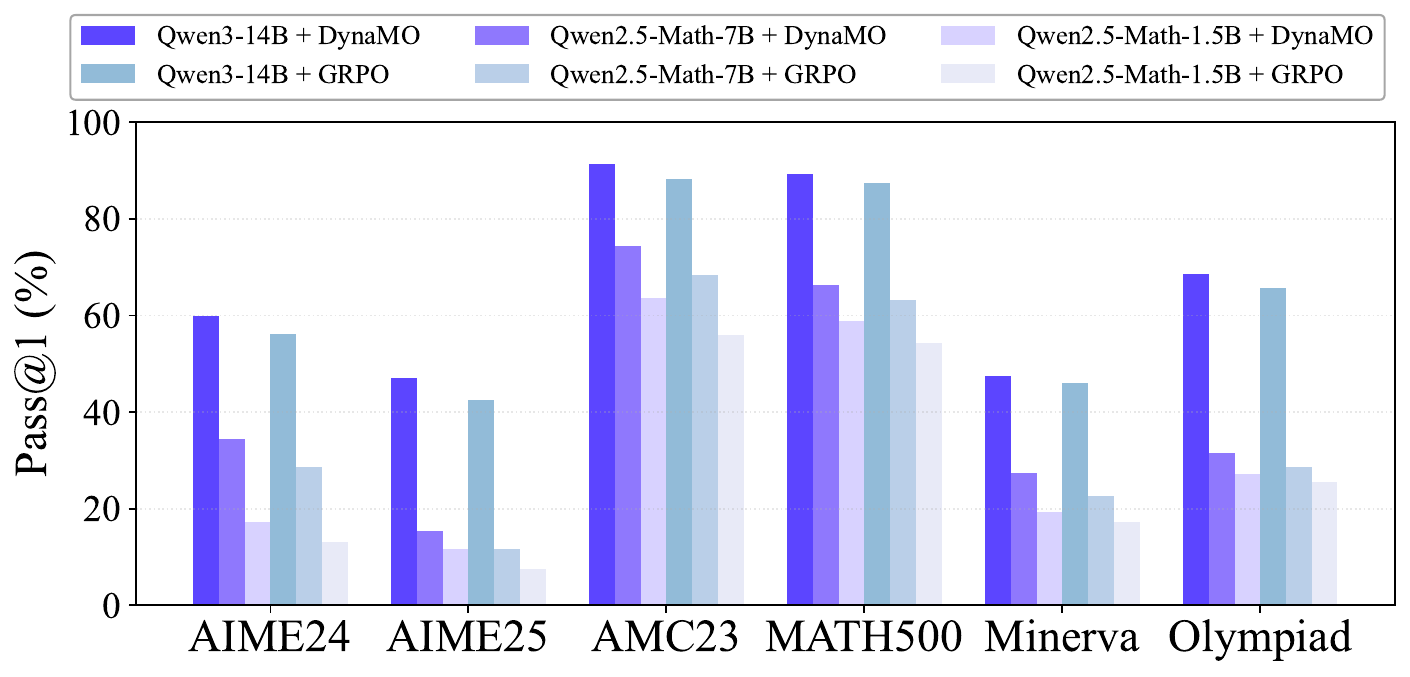}
\caption{Performance comparison across different LLM scales (1.5B/7B/14B) with DynaMO and GRPO.}
\vspace{-1em}
\label{fig:scaling_analysis}
\end{figure}

To validate scalability, we extend experiments to Qwen3-14B across all benchmarks in Figure~\ref{fig:scaling_analysis}. DynaMO consistently outperforms GRPO with widening gaps as scale increases: moderate gains at 1.5B, amplified substantially at 7B, and expanded further at 14B. This trend confirms that our theoretically-grounded mechanisms become increasingly effective at larger scales, as variance-driven allocation and gradient-aware modulation better exploit the enhanced representational capacity while mitigating intensified optimization instabilities.

\section{Case Study}

\begin{figure}[t]
\centerline{\includegraphics[width=0.45\linewidth]{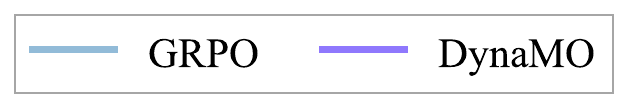}}
\vspace{-5pt}  
\subfigure{
\begin{minipage}[t]{0.48\linewidth}
\centerline{\includegraphics[width=1\hsize]{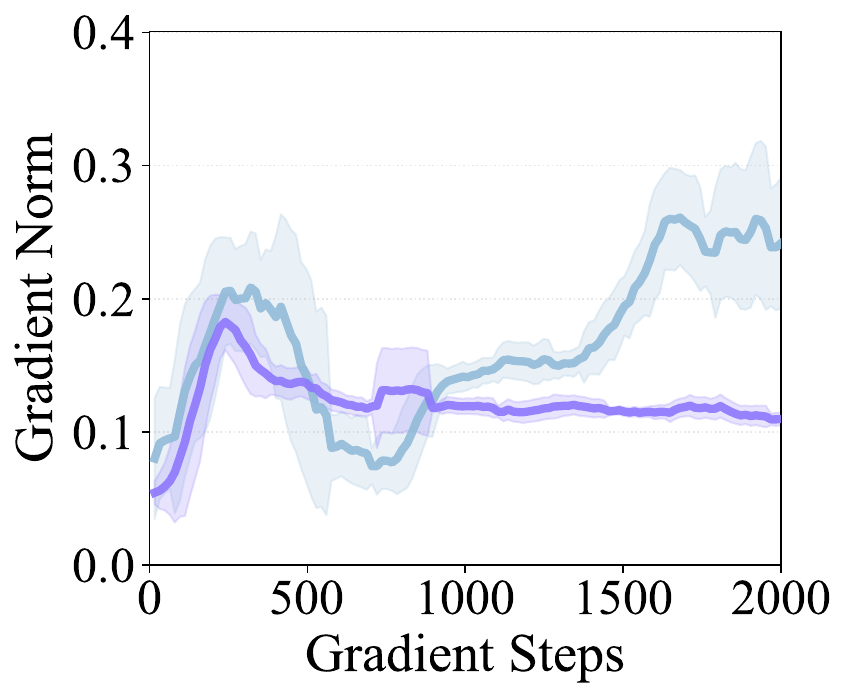}}
\end{minipage}}
\subfigure{
\begin{minipage}[t]{0.48\linewidth}
\centerline{\includegraphics[width=1\hsize]{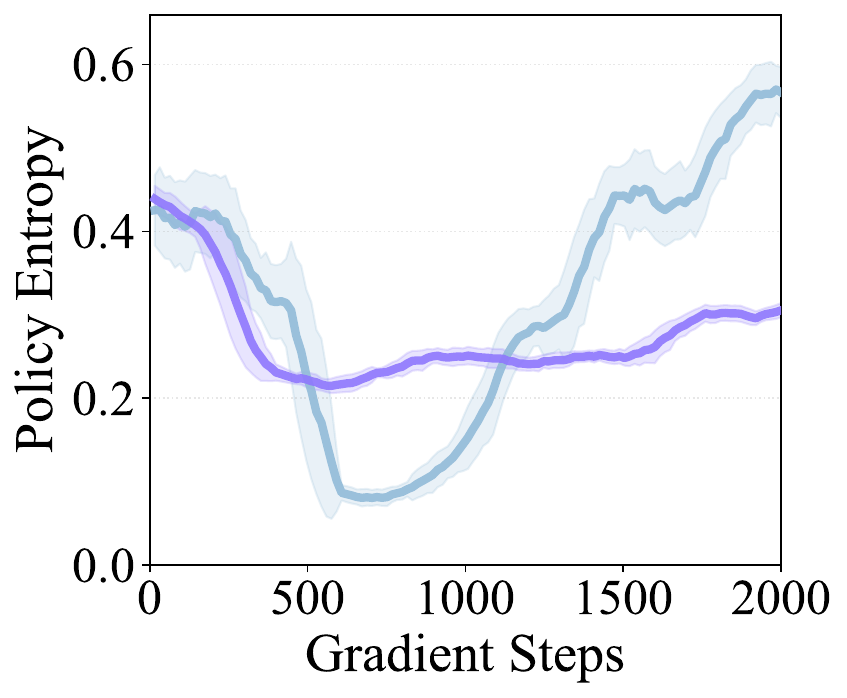}}
\end{minipage}}
\vspace{-1em}
\caption{Training dynamics comparison. DynaMO maintains stable gradient norms and smooth entropy evolution, while GRPO exhibits severe spikes and fluctuations. Shaded regions show raw data range.}
\vspace{-0.7em}
\label{fig:case_study}
\end{figure}

To provide a deeper insight into the effectiveness of DynaMO, we compare it with GRPO through training dynamics analysis. As visually depicted in Figure~\ref{fig:case_study}, GRPO suffers from severe gradient spikes and erratic entropy fluctuations, whereas DynaMO maintains stable dynamics across both metrics. Two core components of DynaMO contribute to this stability: dynamic rollout allocation concentrates the budget on high-variance problems, while gradient-aware modulation prevents excessive update magnitudes. Notably, despite following similar overall trends, DynaMO achieves substantially smoother transitions in gradient norms and policy entropy, indicating more controlled optimization process throughout training. These phenomena confirm that DynaMO mitigates training instability at both sequence and token levels while simultaneously achieving superior performance.

\section{Conclusion}
In this paper, we propose \textbf{DynaMO}, a theoretically-grounded dual-pronged framework designed to systematically address fundamental RLVR challenges. \textit{At the sequence level}, we prove that uniform allocation is suboptimal and subsequently derive a variance-minimizing allocation strategy to concentrate computational resources on high-informativeness problems. \textit{At the token level}, we establish the gradient-entropy relationship, enabling an integrated advantage modulation mechanism that compensates for gradient attenuation while stabilizing excessive updates. Extensive experiments conducted across multiple reasoning benchmarks and varying LLM scales demonstrate consistent improvements over strong baselines, with comprehensive ablation studies validating the independent contributions from both mechanisms.

\section*{Limitations}
Our evaluation is conducted on the Qwen model family, spanning three distinct parameter scales (1.5B, 7B, and 14B). Although we have not yet systematically evaluated other model families, it is important to note that our method operates purely at the algorithmic level—modifying rollout allocation and advantage computation—without relying on any architecture-specific features, suggesting broad transferability potential. Future work could extend this evaluation to additional model architectures and scales, as well as multimodal reasoning scenarios where robustness against adversarial attacks~\cite{li2026makingmllmsblindadversarial} becomes critical.

\bibliography{custom}

@inproceedings{mnih2016asynchronous,
  title={Asynchronous methods for deep reinforcement learning},
  author={Mnih, Volodymyr and Badia, Adria Puigdomenech and Mirza, Mehdi and Graves, Alex and Lillicrap, Timothy and Harley, Tim and Silver, David and Kavukcuoglu, Koray},
  booktitle={International Conference on Machine Learning (ICML)},
  year={2016}
}

@article{schulman2017proximal,
  title={Proximal policy optimization algorithms},
  author={Schulman, John and Wolski, Filip and Dhariwal, Prafulla and Radford, Alec and Klimov, Oleg},
  journal={arXiv preprint arXiv:1707.06347},
  year={2017}
}

@article{sutton2018reinforcement,
  title={Reinforcement learning: an introduction},
  author={Sutton, Richard and Barto, Andrew},
  year={2018}
}

@article{li2025logit,
  title={Logit dynamics in softmax policy gradient methods},
  author={Li, Yingru},
  journal={arXiv preprint arXiv:2506.12912},
  year={2025}
}

@article{ouyang2022training,
  title={Training language models to follow instructions with human feedback},
  author={Ouyang, Long and Wu, Jeffrey and Jiang, Xu and Almeida, Diogo and Wainwright, Carroll and Mishkin, Pamela and Zhang, Chong and Agarwal, Sandhini and Slama, Katarina and Ray, Alex and others},
  journal={Conference on Neural Information Processing Systems (NeurIPS)},
  year={2022}
}

@article{bai2022training,
  title={Training a helpful and harmless assistant with reinforcement learning from human feedback},
  author={Bai, Yuntao and Jones, Andy and Ndousse, Kamal and Askell, Amanda and Chen, Anna and DasSarma, Nova and Drain, Dawn and Fort, Stanislav and Ganguli, Deep and Henighan, Tom and others},
  journal={arXiv preprint arXiv:2204.05862},
  year={2022}
}

@article{shao2024deepseekmath,
  title={Deepseekmath: Pushing the limits of mathematical reasoning in open language models},
  author={Shao, Zhihong and Wang, Peiyi and Zhu, Qihao and Xu, Runxin and Song, Junxiao and Bi, Xiao and Zhang, Haowei and Zhang, Mingchuan and Li, YK and Wu, Yang and others},
  journal={arXiv preprint arXiv:2402.03300},
  year={2024}
}

@article{deepseekai2025deepseekr1incentivizingreasoningcapability,
  title={Deepseek-r1: Incentivizing reasoning capability in llms via reinforcement learning},
  author={Guo, Daya and Yang, Dejian and Zhang, Haowei and Song, Junxiao and Zhang, Ruoyu and Xu, Runxin and Zhu, Qihao and Ma, Shirong and Wang, Peiyi and Bi, Xiao and others},
  journal={arXiv preprint arXiv:2501.12948},
  year={2025}
}

@article{jaech2024openai,
  title={Openai o1 system card},
  author={Jaech, Aaron and Kalai, Adam and Lerer, Adam and Richardson, Adam and El-Kishky, Ahmed and Low, Aiden and Helyar, Alec and Madry, Aleksander and Beutel, Alex and Carney, Alex and others},
  journal={arXiv preprint arXiv:2412.16720},
  year={2024}
}

@article{team2025kimi,
  title={Kimi k1. 5: Scaling reinforcement learning with llms},
  author={Team, Kimi and Du, Angang and Gao, Bofei and Xing, Bowei and Jiang, Changjiu and Chen, Cheng and Li, Cheng and Xiao, Chenjun and Du, Chenzhuang and Liao, Chonghua and others},
  journal={arXiv preprint arXiv:2501.12599},
  year={2025}
}

@article{liu2025understanding,
  title={Understanding r1-zero-like training: A critical perspective},
  author={Liu, Zichen and Chen, Changyu and Li, Wenjun and Qi, Penghui and Pang, Tianyu and Du, Chao and Lee, Wee Sun and Lin, Min},
  journal={arXiv preprint arXiv:2503.20783},
  year={2025}
}

@article{yu2025dapo,
  title={Dapo: An open-source llm reinforcement learning system at scale},
  author={Yu, Qiying and Zhang, Zheng and Zhu, Ruofei and Yuan, Yufeng and Zuo, Xiaochen and Yue, Yu and Fan, Tiantian and Liu, Gaohong and Liu, Lingjun and Liu, Xin and others},
  journal={arXiv preprint arXiv:2503.14476},
  year={2025}
}

@article{chu2025gpg,
  title={Gpg: A simple and strong reinforcement learning baseline for model reasoning},
  author={Chu, Xiangxiang and Huang, Hailang and Zhang, Xiao and Wei, Fei and Wang, Yong},
  journal={arXiv preprint arXiv:2504.02546},
  year={2025}
}

@article{hu2025open,
  title={Open-reasoner-zero: An open source approach to scaling up reinforcement learning on the base model},
  author={Hu, Jingcheng and Zhang, Yinmin and Han, Qi and Jiang, Daxin and Zhang, Xiangyu and Shum, Heung-Yeung},
  journal={arXiv preprint arXiv:2503.24290},
  year={2025}
}

@inproceedings{haarnoja2018soft,
  title={Soft actor-critic: Off-policy maximum entropy deep reinforcement learning with a stochastic actor},
  author={Haarnoja, Tuomas and Zhou, Aurick and Abbeel, Pieter and Levine, Sergey},
  booktitle={International Conference on Machine Learning (ICML)},
  year={2018}
}

@article{luo2025deepscaler,
  title={Deepscaler: Surpassing o1-preview with a 1.5 b model by scaling rl},
  author={Luo, Michael and Tan, Sijun and Wong, Justin and Shi, Xiaoxiang and Tang, William Y and Roongta, Manan and Cai, Colin and Luo, Jeffrey and Zhang, Tianjun and Li, Li Erran and others},
  journal={Notion Blog},
  year={2025}
}

@article{yang2025dcpo,
  title={DCPO: Dynamic clipping policy optimization},
  author={Yang, Shihui and Dou, Chengfeng and Guo, Peidong and Lu, Kai and Ju, Qiang and Deng, Fei and Xin, Rihui},
  journal={arXiv preprint arXiv:2509.02333},
  year={2025}
}

@article{zhu2025surprising,
  title={The surprising effectiveness of negative reinforcement in LLM reasoning},
  author={Zhu, Xinyu and Xia, Mengzhou and Wei, Zhepei and Chen, Wei-Lin and Chen, Danqi and Meng, Yu},
  journal={arXiv preprint arXiv:2506.01347},
  year={2025}
}

@article{cheng2025reasoning,
  title={Reasoning with exploration: An entropy perspective},
  author={Cheng, Daixuan and Huang, Shaohan and Zhu, Xuekai and Dai, Bo and Zhao, Wayne Xin and Zhang, Zhenliang and Wei, Furu},
  journal={arXiv preprint arXiv:2506.14758},
  year={2025}
}

@article{tan2025gtpo,
  title={GTPO and GRPO-S: Token and sequence-level reward shaping with policy entropy},
  author={Tan, Hongze and Pan, Jianfei},
  journal={arXiv preprint arXiv:2508.04349},
  year={2025}
}

@article{wang2025beyond,
  title={Beyond the 80/20 rule: High-entropy minority tokens drive effective reinforcement learning for llm reasoning},
  author={Wang, Shenzhi and Yu, Le and Gao, Chang and Zheng, Chujie and Liu, Shixuan and Lu, Rui and Dang, Kai and Chen, Xionghui and Yang, Jianxin and Zhang, Zhenru and others},
  journal={arXiv preprint arXiv:2506.01939},
  year={2025}
}

@article{yao2025optimizing,
  title={Optimizing chain-of-thought reasoners via gradient variance minimization in rejection sampling and rl},
  author={Yao, Jiarui and Hao, Yifan and Zhang, Hanning and Dong, Hanze and Xiong, Wei and Jiang, Nan and Zhang, Tong},
  journal={arXiv preprint arXiv:2505.02391},
  year={2025}
}

@article{hendrycks2021measuring,
  title={Measuring mathematical problem solving with the math dataset},
  author={Hendrycks, Dan and Burns, Collin and Kadavath, Saurav and Arora, Akul and Basart, Steven and Tang, Eric and Song, Dawn and Steinhardt, Jacob},
  journal={arXiv preprint arXiv:2103.03874},
  year={2021}
}

@article{lewkowycz2022solving,
  title={Solving quantitative reasoning problems with language models},
  author={Lewkowycz, Aitor and Andreassen, Anders and Dohan, David and Dyer, Ethan and Michalewski, Henryk and Ramasesh, Vinay and Slone, Ambrose and Anil, Cem and Schlag, Imanol and Gutman-Solo, Theo and others},
  journal={Conference on Neural Information Processing Systems (NeurIPS)},
  year={2022}
}

@article{he2024olympiadbench,
  title={Olympiadbench: A challenging benchmark for promoting agi with olympiad-level bilingual multimodal scientific problems},
  author={He, Chaoqun and Luo, Renjie and Bai, Yuzhuo and Hu, Shengding and Thai, Zhen Leng and Shen, Junhao and Hu, Jinyi and Han, Xu and Huang, Yujie and Zhang, Yuxiang and others},
  journal={arXiv preprint arXiv:2402.14008},
  year={2024}
}

@article{unbiasedpassk,
  title={Evaluating large language models trained on code},
  author={Chen, Mark and Tworek, Jerry and Jun, Heewoo and Yuan, Qiming and Pinto, Henrique Ponde De Oliveira and Kaplan, Jared and Edwards, Harri and Burda, Yuri and Joseph, Nicholas and Brockman, Greg and others},
  journal={arXiv preprint arXiv:2107.03374},
  year={2021}
}

@article{RLLimit,
  title={Does reinforcement learning really incentivize reasoning capacity in llms beyond the base model?},
  author={Yue, Yang and Chen, Zhiqi and Lu, Rui and Zhao, Andrew and Wang, Zhaokai and Song, Shiji and Huang, Gao},
  journal={arXiv preprint arXiv:2504.13837},
  year={2025}
}

@article{gulcehre2023reinforced,
  title={Reinforced self-training (rest) for language modeling},
  author={Gulcehre, Caglar and Paine, Tom Le and Srinivasan, Srivatsan and Konyushkova, Ksenia and Weerts, Lotte and Sharma, Abhishek and Siddhant, Aditya and Ahern, Alex and Wang, Miaosen and Gu, Chenjie and others},
  journal={arXiv preprint arXiv:2308.08998},
  year={2023}
}

@article{dong2023raft,
  title={Raft: Reward ranked finetuning for generative foundation model alignment},
  author={Dong, Hanze and Xiong, Wei and Goyal, Deepanshu and Zhang, Yihan and Chow, Winnie and Pan, Rui and Diao, Shizhe and Zhang, Jipeng and Shum, Kashun and Zhang, Tong},
  journal={arXiv preprint arXiv:2304.06767},
  year={2023}
}

@article{tong2024dart,
  title={Dart-math: Difficulty-aware rejection tuning for mathematical problem-solving},
  author={Tong, Yuxuan and Zhang, Xiwen and Wang, Rui and Wu, Ruidong and He, Junxian},
  journal={Conference on Neural Information Processing Systems (NeurIPS)},
  year={2024}
}

@inproceedings{bengio2009curriculum,
  title={Curriculum learning},
  author={Bengio, Yoshua and Louradour, J{\'e}r{\^o}me and Collobert, Ronan and Weston, Jason},
  booktitle={International Conference on Machine Learning (ICML)},
  year={2009}
}

@article{schaul2015prioritized,
  title={Prioritized experience replay},
  author={Schaul, Tom and Quan, John and Antonoglou, Ioannis and Silver, David},
  journal={arXiv preprint arXiv:1511.05952},
  year={2015}
}

@misc{AIME,
  title={American Invitational Mathematics Examination - AIME},
  author={{MAA}},
  url={https://maa.org/},
  year={2025}
}

@misc{AMC,
  title={American Mathematics Competitions - AMC},
  author={{MAA}},
  url={https://maa.org/},
  year={2023}
}

@misc{mathverify,
  title={Math-Verify},
  author={HuggingFace},
  url={https://github.com/huggingface/Math-Verify},
  year={2025}
}

@inproceedings{sheng2025hybridflow,
  title={Hybridflow: A flexible and efficient rlhf framework},
  author={Sheng, Guangming and Zhang, Chi and Ye, Zilingfeng and Wu, Xibin and Zhang, Wang and Zhang, Ru and Peng, Yanghua and Lin, Haibin and Wu, Chuan},
  booktitle={European Conference on Computer Systems (EuroSys)},
  year={2025}
}

@inproceedings{liu2021quadrupletbert,
  title={QuadrupletBERT: An efficient model for embedding-based large-scale retrieval},
  author={Liu, Peiyang and Wang, Sen and Wang, Xi and Ye, Wei and Zhang, Shikun},
  booktitle={Proceedings of the 2021 Conference of the North American Chapter of the Association for Computational Linguistics: Human Language Technologies},
  pages={3734--3739},
  year={2021}
}

@inproceedings{liu2021improving,
  title={Improving embedding-based large-scale retrieval via label enhancement},
  author={Liu, Peiyang and Wang, Xi and Wang, Sen and Ye, Wei and Xi, Xiangyu and Zhang, Shikun},
  booktitle={Findings of the Association for Computational Linguistics: EMNLP 2021},
  pages={133--142},
  year={2021}
}

@inproceedings{liu2022label,
  title={Label smoothing for text mining},
  author={Liu, Peiyang and Xi, Xiangyu and Ye, Wei and Zhang, Shikun},
  booktitle={Proceedings of the 29th international conference on computational linguistics},
  pages={2210--2219},
  year={2022}
}

@article{fang2026ckpro,
  author       = {Tianqing Fang and
                  Zhisong Zhang and
                  Xiaoyang Wang and
                  Rui Wang and
                  Can Qin and
                  Yuxuan Wan and
                  Jun{-}Yu Ma and
                  Ce Zhang and
                  Jiaqi Chen and
                  Xiyun Li and
                  Hongming Zhang and
                  Haitao Mi and
                  Dong Yu},
  title        = {Cognitive Kernel-Pro: {A} Framework for Deep Research Agents and Agent
                  Foundation Models Training},
  journal      = {CoRR},
  volume       = {abs/2508.00414},
  year         = {2025},
  url          = {https://doi.org/10.48550/arXiv.2508.00414},
  doi          = {10.48550/ARXIV.2508.00414},
  eprinttype   = {arXiv},
  eprint       = {2508.00414},
}

@article{li2026cso,
  author       = {Mukai Li and
                  Qingcheng Zeng and
                  Tianqing Fang and
                  Zhenwen Liang and
                  Linfeng Song and
                  Qi Liu and
                  Haitao Mi and
                  Dong Yu},
  title        = {Verified Critical Step Optimization for {LLM} Agents},
  journal      = {CoRR},
  volume       = {abs/2602.03412},
  year         = {2026},
  url          = {https://doi.org/10.48550/arXiv.2602.03412},
  doi          = {10.48550/ARXIV.2602.03412},
  eprinttype   = {arXiv},
  eprint       = {2602.03412},
}

@article{yu2026rfew,
  author       = {Wenhao Yu and
                  Zhenwen Liang and
                  Chengsong Huang and
                  Kishan Panaganti and
                  Tianqing Fang and
                  Haitao Mi and
                  Dong Yu},
  title        = {Guided Self-Evolving LLMs with Minimal Human Supervision},
  journal      = {CoRR},
  volume       = {abs/2512.02472},
  year         = {2025},
  url          = {https://doi.org/10.48550/arXiv.2512.02472},
  doi          = {10.48550/ARXIV.2512.02472},
  eprinttype   = {arXiv},
  eprint       = {2512.02472},
}

@misc{fang2026placingpuzzlepiecesmatter,
      title={Placing Puzzle Pieces Where They Matter: A Question Augmentation Framework for Reinforcement Learning}, 
      author={Yangyi Fang and Jiaye Lin and Xiaoliang Fu and Cong Qin and Haolin Shi},
      year={2026},
      eprint={2604.15830},
      archivePrefix={arXiv},
      primaryClass={cs.LG},
      url={https://arxiv.org/abs/2604.15830}, 
}

@misc{li2026makingmllmsblindadversarial,
      title={Making MLLMs Blind: Adversarial Smuggling Attacks in MLLM Content Moderation}, 
      author={Zhiheng Li and Zongyang Ma and Yuntong Pan and Ziqi Zhang and Xiaolei Lv and Bo Li and Jun Gao and Jianing Zhang and Chunfeng Yuan and Bing Li and Weiming Hu},
      year={2026},
      eprint={2604.06950},
      archivePrefix={arXiv},
      primaryClass={cs.CV},
      url={https://arxiv.org/abs/2604.06950 }, 
}

\appendix

\label{sec:appendix}
\section{Derivation of Entropy Change Estimation for Gradient-Aware Advantage Modulation}
\label{sec:entropy_derivation}

We derive the first-order entropy change estimation for tabular softmax policy through a factorized decomposition approach. Let the actor policy $\pi_\theta$ be a tabular softmax policy where each state-action pair $(s, a)$ is associated with an individual logit parameter $z_{s,a} = \theta_{s,a}$.

\subsection{Preliminary Definitions}

\begin{definition}[Policy Concentration Measure]
\label{def:concentration}
For a policy $\pi_\theta(\cdot|s)$ at state $s$, we define the \textit{concentration coefficient}:
\begin{equation}
\small
\Phi(\pi_\theta|s) := \sum_{a} \pi_\theta(a|s)^2.
\end{equation}
This measures the degree of policy concentration, with higher values indicating a more deterministic policy.
\end{definition}

\begin{definition}[Entropy-Weighted Log-Probability]
\label{def:centered_logprob}
We define the \textit{centered log-probability} for action $a$ under policy $\pi_\theta$ at state $s$:
\begin{equation}
\small
\begin{aligned}
\Lambda_\theta(a|s) := \log\pi_\theta(a|s) + \mathcal{H}(\pi_\theta|s).
\end{aligned}
\end{equation}
This represents the deviation of action $a$'s log-probability from the policy's average entropy, capturing the information-theoretic distance from uniform randomness.
\end{definition}

\subsection{Entropy Gradient Derivation}
We use Taylor's expansion under first-order approximation:
\begin{equation}
\small
\label{eq:taylor_entropy}
\begin{aligned}  
    &\mathcal{H}(\pi_{\theta}^{k+1} \mid s) \\
    &\approx \mathcal{H}(\pi_{\theta}^k \mid s) \\
    &\quad + \langle\nabla \mathcal{H}(\pi_{\theta}^k \mid s), (z^{k+1} - z^k)\rangle.
\end{aligned}
\end{equation}

To derive $\nabla \mathcal{H}(\pi_{\theta}^k \mid s)$, we start from the definition of entropy:

\begin{equation}
\small
\label{eq:entropy_gradient}
\begin{aligned}  
&\nabla_\theta \mathcal{H}(\pi_\theta \mid s) \\
&= \nabla_\theta \left(-\mathbb{E}_{a \sim \pi_\theta(\cdot \mid s)} [\log \pi_\theta(a \mid s)]\right) \\
&= -\nabla_\theta \mathbb{E}_{a \sim \pi_\theta(\cdot \mid s)} [\log \pi_\theta(a \mid s)] \\
&= -\mathbb{E}_{a \sim \pi_\theta(\cdot \mid s)} [\nabla_\theta \log \pi_\theta (a \mid s)] \\
&\quad - \mathbb{E}_{a \sim \pi_\theta(\cdot \mid s)} [\log\pi_\theta(a \mid s) \\
&\quad\quad \times \nabla_\theta\log\pi_\theta(a \mid s)] \\
&= 0 - \mathbb{E}_{a \sim \pi_\theta(\cdot \mid s)} [\log\pi_\theta(a \mid s) \\
&\quad\quad \times \nabla_\theta\log\pi_\theta(a \mid s)] \\
&= -\mathbb{E}_{a \sim \pi_\theta(\cdot \mid s)} [\log\pi_\theta(a \mid s) \\
&\quad\quad \times \nabla_\theta\log\pi_\theta(a \mid s)].
\end{aligned}
\end{equation}

The first term equals zero because $\mathbb{E}_{a \sim \pi_\theta(\cdot \mid s)} [\nabla_\theta \log \pi_\theta (a \mid s)] = \nabla_\theta \sum_a \pi_\theta(a \mid s) = \nabla_\theta 1 = 0$.

\subsection{Softmax Derivative and Centered Logit Updates}

For a softmax policy $\pi_\theta(a \mid s) = \frac{\exp(z_{s,a})}{\sum_{a''} \exp(z_{s,a''})}$, the derivative of the log-probability with respect to any logit parameter $z_{s,a'}$ is:
\begin{equation}
\small
\label{eq:softmax_derivative}
\begin{aligned}
\frac{\partial \log \pi_\theta(a \mid s)}{\partial z_{s, a'}} = \mathbf{1}\{a=a'\} \\
-\pi_\theta(a' \mid s).
\end{aligned}
\end{equation}

We now introduce a centered form of logit updates that exploits the translation invariance of softmax:

\begin{definition}[Centered Logit Change]
\label{def:centered_logit}
Define the \textit{centered logit change} as:
\begin{equation}
\small
\begin{aligned}
\delta z_{s,a} &:= z^{k+1}_{s,a} - z^k_{s,a} \\
&\quad - \mathbb{E}_{a' \sim \pi_\theta^k(\cdot|s)}[z^{k+1}_{s,a'} - z^k_{s,a'}].
\end{aligned}
\end{equation}
This removes the global translation component, which does not affect softmax probabilities.
\end{definition}

Using Equation~\eqref{eq:softmax_derivative}, we can express the first-order entropy change:

\begin{equation}
\small
\label{eq:entropy_change_intermediate}
\begin{aligned}  
&\langle\nabla_\theta \mathcal{H}(\theta^k \mid s), (z^{k+1} - z^k)\rangle \\
&= -\mathbb{E}_{a \sim \pi_\theta^k(\cdot \mid s)}\Big[\log\pi_\theta^k(a \mid s) \\
&\quad \times \sum_{a'} \big(\mathbf{1}\{a=a'\}-\pi_\theta^k(a' \mid s)\big) \\
&\quad \times (z^{k+1}_{s,a'} - z^k_{s,a'})\Big] \\
&= -\mathbb{E}_{a \sim \pi_\theta^k(\cdot \mid s)}\Big[\log\pi_\theta^k(a \mid s) \\
&\quad \times \big(z^{k+1}_{s,a} - z^k_{s,a} \\
&\quad - \sum_{a'} \pi_\theta^k(a' \mid s) (z^{k+1}_{s,a'} - z^k_{s,a'})\big)\Big] \\
&= -\mathbb{E}_{a \sim \pi_\theta^k(\cdot \mid s)}[\log\pi_\theta^k(a \mid s) \cdot \delta z_{s,a}].
\end{aligned}
\end{equation}

Expanding further and using the covariance decomposition $\mathbb{E}[XY] = \mathbb{E}[X]\mathbb{E}[Y] + \text{Cov}(X,Y)$:

\begin{equation}
\small
\label{eq:entropy_change_covariance}
\begin{aligned}  
&\langle\nabla_\theta \mathcal{H}(\theta^k \mid s), (z^{k+1} - z^k)\rangle \\
&= -\mathbb{E}_{a \sim \pi_\theta^k(\cdot \mid s)}[\log\pi_\theta^k(a \mid s) \\
&\quad \times (z^{k+1}_{s,a} - z^k_{s,a})] \\
&\quad + \mathbb{E}_{a \sim \pi_\theta^k(\cdot \mid s)}[\log\pi_\theta^k(a \mid s)] \\
&\quad \times \mathbb{E}_{a' \sim \pi_\theta^k(\cdot \mid s)}[z^{k+1}_{s,a'} - z^k_{s,a'}] \\
&= -\mathbb{E}_{a \sim \pi_\theta^k(\cdot \mid s)}\Big[\big(\log\pi_\theta^k(a \mid s) \\
&\quad - \mathbb{E}_{a \sim \pi_\theta^k(\cdot \mid s)}[\log\pi_\theta^k(a \mid s)]\big) \\
&\quad \times \delta z_{s,a}\Big] \\
&= -\mathbb{E}_{a \sim \pi_\theta^k(\cdot \mid s)}[\Lambda_\theta^k(a|s) \cdot \delta z_{s,a}],
\end{aligned}
\end{equation}
where we used Definition~\ref{def:centered_logprob} in the last step, noting that $-\mathbb{E}[\log\pi_\theta] = \mathcal{H}(\pi_\theta|s)$.

\subsection{GRPO Update Coefficient}

For GRPO, we define the \textit{composite update coefficient}:
\begin{equation}
\small
\label{eq:update_coefficient}
\xi_{i,t}(a) := \mathbb{I}_{\text{clip}} \cdot r_{i,t} \cdot A_{i,t},
\end{equation}
where $\mathbb{I}_{\text{clip}}$ is the clipping indicator function, $r_{i,t} = \frac{\pi_\theta(a_{i,t} \mid s_{i,t})}{\pi_{\text{ref}}(a_{i,t} \mid s_{i,t})}$ is the importance sampling ratio, and $A_{i,t}$ is the advantage estimate. This coefficient combines three essential components: clipping for stability, importance weighting for off-policy correction, and advantage for policy improvement direction.

\begin{lemma}[Centered Logit Update for GRPO]
\label{lemma:grpo_update}
Under the GRPO update rule, the per-sample stochastic logit update satisfies:
\begin{equation}
\small
\begin{aligned}
z_{s,k}^{k+1} - z_{s,k}^k &= \eta\, \xi_{i,t}\,\big(\mathbf{1}\{a=k\} - \pi_\theta(k\mid s)\big).
\end{aligned}
\end{equation}
Specifically, for the sampled action $a$ and other actions $a' \neq a$:
\begin{equation}
\small
\begin{aligned}
z_{s,a}^{k+1} - z_{s,a}^k &= \eta\, \xi_{i,t}\,(1-\pi_\theta(a\mid s)) \\
z_{s,a'}^{k+1} - z_{s,a'}^k &= -\eta\, \xi_{i,t}\,\pi_\theta(a'\mid s).
\end{aligned}
\end{equation}
\end{lemma}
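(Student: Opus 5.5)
The plan is to compute the gradient of the per-token GRPO surrogate term with respect to the tabular logits $z_{s,\cdot}$ and read off one step of stochastic gradient ascent. Fix a sampled token $a = a_{i,t}$ at state $s = s_{i,t}$. Because the policy is tabular, the only parameters that receive gradient from this token are the logits $z_{s,k}$, $k\in\mathcal{V}$, of that state. The contribution of the token to the objective is $\min\big(r_{i,t}A_{i,t}, \text{clip}(r_{i,t},1-\epsilon,1+\epsilon)A_{i,t}\big)$. The update is $z^{k+1}_{s,\cdot} = z^{k}_{s,\cdot} + \eta\,\nabla_{z_{s,\cdot}}(\cdot)$, and the normalization $1/\sum_i|o_i|$ is absorbed into $\eta$.

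First, I handle the clipping by cases. When the min selects the clipped branch and the ratio lies outside $[1-\epsilon,1+\epsilon]$, the term is constant in $\theta$ and the gradient vanishes. Otherwise the term equals $r_{i,t}A_{i,t}$. Both cases are captured by the indicator $\mathbb{I}_{\text{clip}}$, so the gradient is $\mathbb{I}_{\text{clip}}\, A_{i,t}\,\nabla_z r_{i,t}$.

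Second, I use that $\pi_{\text{old}}$ is held fixed, so $\nabla_z r_{i,t} = r_{i,t}\,\nabla_z \log\pi_\theta(a\mid s)$. The gradient therefore becomes $\xi_{i,t}\,\nabla_z\log\pi_\theta(a\mid s)$, with $\xi_{i,t}$ as in Equation~\eqref{eq:update_coefficient}.

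Third, I invoke the softmax derivative, Equation~\eqref{eq:softmax_derivative}: $\partial\log\pi_\theta(a\mid s)/\partial z_{s,k} = \mathbf{1}\{a=k\}-\pi_\theta(k\mid s)$. Multiplying by $\eta$ gives the stated formula for every $k$. Specializing to $k=a$ and to $k=a'\neq a$ yields the two displayed cases.

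The main obstacle is bookkeeping rather than mathematics. I need to make precise that the clipping indicator absorbs the piecewise behavior of the min/clip, noting that the gradient is undefined only on a measure-zero boundary. I also need to state that the importance ratio is evaluated at the current iterate, so $r_{i,t}$ is taken at $\theta^k$ and the paper's $\pi_{\text{ref}}$ plays the role of $\pi_{\text{old}}$. I would state explicitly that this is a single-sample, first-order SGD step, and that the symbol $k$ is overloaded between the iteration index and the action index.
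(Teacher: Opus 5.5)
Your proposal is correct and follows the same route the paper relies on. The paper states this lemma without a separate written proof and leans on exactly the steps you spell out: the clipped surrogate's gradient reduces to $\mathbb{I}_{\text{clip}}\, r_{i,t} A_{i,t}\,\nabla_z \log\pi_\theta(a\mid s)$, and the softmax log-derivative $\partial\log\pi_\theta(a\mid s)/\partial z_{s,k} = \mathbf{1}\{a=k\}-\pi_\theta(k\mid s)$ from Equation~\eqref{eq:softmax_derivative} finishes the computation. Your bookkeeping makes explicit what the paper leaves unstated: how to read $\mathbb{I}_{\text{clip}}$, absorbing the length normalization into $\eta$, evaluating $r_{i,t}$ at $\theta^k$ with $\pi_{\text{ref}}$ acting as $\pi_{\text{old}}$, and the overloaded index $k$.
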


\begin{remark}
By the definition of advantage functions~\cite{sutton2018reinforcement} and GRPO's group normalization~\cite{shao2024deepseekmath}, advantages satisfy $\mathbb{E}_{a\sim\pi_\theta(\cdot\mid s)}[A_{i,t}] = 0$.
\end{remark}

\subsection{Factorized Entropy Change Formula}

Substituting Lemma~\ref{lemma:grpo_update} into Equation~\eqref{eq:entropy_change_covariance}:

\begin{equation}
\small
\label{eq:entropy_change_expansion}
\begin{aligned}  
&\langle\nabla_\theta \mathcal{H}(\theta^k \mid s), (z^{k+1} - z^k)\rangle \\
&= -\sum_{k} \pi_\theta^k(k\mid s)\,\Lambda_\theta^k(k|s) \big(z^{k+1}_{s,k} - z^k_{s,k}\big) \\
&= -\eta\, \xi_{i,t}\,\Big[ \pi_\theta^k(a\mid s)\,\Lambda_\theta^k(a|s) \\
&\quad - \sum_{k} \pi_\theta^k(k\mid s)^2\,\Lambda_\theta^k(k|s) \Big].
\end{aligned}
\end{equation}

Taking expectation over $a\sim\pi_\theta^k(\cdot\mid s)$ under GRPO's group normalization, the entropy change is dominated by:

\begin{equation}
\small
\label{eq:entropy_change_expectation}
\begin{aligned}
\Delta \mathcal{H}(\pi_\theta^k|s) &\approx -\eta\,\mathbb{E}_{a\sim\pi_\theta^k}\Big[ \pi_\theta^k(a\mid s)  \Lambda_\theta^k(a|s)\, \xi_{i,t}(a) \Big].
\end{aligned}
\end{equation}

We now state our main result:

\begin{theorem}[Factorized Entropy Change]
\label{thm:factorized_entropy}
Following GRPO's design~\cite{shao2024deepseekmath} with standard clipping, the first-order entropy change admits:
\begin{equation}
\small
\boxed{
\begin{aligned}
\Delta \mathcal{H}(\pi_\theta^k|s) &\approx -\eta \sum_a \underbrace{\pi_\theta^k(a|s)^2}_{\text{Concentration}} \\
&\quad \times \underbrace{\Lambda_\theta^k(a|s)}_{\text{Info Deviation}} \\
&\quad \times \underbrace{\xi_{i,t}(a)}_{\text{Update Coeff.}}
\end{aligned}
}
\end{equation}
where $\Lambda_\theta^k(a|s) = \log\pi_\theta^k(a|s) + \mathcal{H}(\pi_{\theta}^{k}|s)$ (Definition~\ref{def:centered_logprob}), and:
\begin{itemize}[leftmargin=*,nosep]
    \item $\pi_\theta^k(a|s)^2$: Policy concentration
    \item $\Lambda_\theta^k(a|s)$: Info-theoretic deviation
    \item $\xi_{i,t}(a) = \mathbb{I}_{\text{clip}} \cdot r_{i,t} \cdot A_{i,t}$: Update coefficient (Eq.~\ref{eq:update_coefficient})
\end{itemize}
\end{theorem}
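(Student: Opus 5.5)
The plan is to start from the first-order Taylor expansion \eqref{eq:taylor_entropy} and use the covariance form \eqref{eq:entropy_change_covariance}. That form writes the first-order entropy change as $-\mathbb{E}_{a\sim\pi_\theta^k}[\Lambda_\theta^k(a|s)\,\delta z_{s,a}]$, i.e.\ minus the covariance, under $\pi_\theta^k(\cdot|s)$, between the log-probability and the logit change. I then substitute the per-sample GRPO logit update of Lemma~\ref{lemma:grpo_update}.

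\textbf{Step 1 (fixed sampled action).} Fix a sampled action $a$ with coefficient $\xi_{i,t}(a)$. The logit change is $\eta\,\xi_{i,t}(a)\,(\mathbf{1}\{k=a\}-\pi_\theta^k(k|s))$. Since $\sum_k \pi_\theta^k(k|s)\Lambda_\theta^k(k|s)=0$, the $\Lambda$-weighting already absorbs any global shift, so centering the update changes nothing. This yields \eqref{eq:entropy_change_expansion}: the first-order change equals $-\eta\,\xi_{i,t}(a)\big[\pi_\theta^k(a|s)\Lambda_\theta^k(a|s)-\sum_k \pi_\theta^k(k|s)^2\Lambda_\theta^k(k|s)\big]$. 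This step is a routine calculation with the softmax Jacobian \eqref{eq:softmax_derivative}.

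\textbf{Step 2 (average over the sampled action).} Take the expectation over $a\sim\pi_\theta^k(\cdot|s)$. The second term is independent of $a$, so it contributes $\eta\,\mathbb{E}_a[\xi_{i,t}(a)]\sum_k \pi_\theta^k(k|s)^2\Lambda_\theta^k(k|s)$. I drop it by arguing $\mathbb{E}_a[\xi_{i,t}(a)]\approx 0$, which needs two approximations:
\begin{itemize}[leftmargin=*,nosep]
\item Near the start of each update, $r_{i,t}\approx 1$ and $\mathbb{I}_{\text{clip}}=1$, since at $\theta=\theta_{\text{old}}$ the ratio is exactly $1$ and nothing is clipped. Hence $\xi_{i,t}(a)\approx A_{i,t}$ up to second-order terms in $\eta$.
\item By the preceding Remark and GRPO's group normalization, $\mathbb{E}_a[A_{i,t}]=0$.
\end{itemize}
This leaves \eqref{eq:entropy_change_expectation}. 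Writing the remaining expectation as a sum, $\mathbb{E}_{a}[\pi_\theta^k(a|s)\Lambda_\theta^k(a|s)\xi_{i,t}(a)]=\sum_a \pi_\theta^k(a|s)\cdot\pi_\theta^k(a|s)\Lambda_\theta^k(a|s)\xi_{i,t}(a)$, produces the squared concentration factor $\pi_\theta^k(a|s)^2$. This gives exactly the boxed factorization.

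\textbf{Main obstacle.} The hardest step is justifying that the dropped term is negligible. It vanishes exactly only if the clipping indicator and the importance ratio are uncorrelated with $A$ in a way that preserves zero mean. In practice clipping is advantage-sign dependent: it caps $r$ at $1+\epsilon$ when $A>0$ and at $1-\epsilon$ when $A<0$. So I would state the result as a first-order approximation in $\eta$ and $\epsilon$. I would bound the residual by $\eta\,|\mathbb{E}_a[\xi_{i,t}(a)-A_{i,t}]|\cdot\big|\sum_k\pi_\theta^k(k|s)^2\Lambda_\theta^k(k|s)\big|$. The first factor is $O(\epsilon\,\mathbb{E}|A|)$ by $|r-1|\le\epsilon$ on the unclipped region, so the residual is of higher order than the retained term. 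That justifies the ``$\approx$'' in the statement.
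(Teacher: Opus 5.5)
Your argument is the paper's own derivation, in the same order. You use the covariance form \eqref{eq:entropy_change_covariance}, substitute Lemma~\ref{lemma:grpo_update} to reach \eqref{eq:entropy_change_expansion}, average over $a\sim\pi_\theta^k(\cdot|s)$, discard the $a$-independent term via the Remark's $\mathbb{E}_a[A_{i,t}]=0$, and rewrite the surviving expectation as a sum to produce $\pi_\theta^k(a|s)^2$. Your observation that $\sum_k\pi_\theta^k(k|s)\Lambda_\theta^k(k|s)=0$ is what lets the paper pass silently from $\delta z_{s,a}$ to the raw update. Your handling of the dropped term is also more explicit than the paper's ``dominated by.''

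\textbf{Error in the residual bound.} Where the clipped objective still passes gradient, you do not have $|r_{i,t}-1|\le\epsilon$. For $A_{i,t}>0$ the gradient survives whenever $r_{i,t}\le 1+\epsilon$, which includes $r_{i,t}<1-\epsilon$. For $A_{i,t}<0$ it survives whenever $r_{i,t}\ge 1-\epsilon$, which includes $r_{i,t}>1+\epsilon$. So the $O(\epsilon\,\mathbb{E}|A|)$ bound does not follow. You do not need it, because your first bullet already suffices. Near $\theta_{\text{old}}$ one has $r_{i,t}=1+O(\eta)$ and $\mathbb{I}_{\text{clip}}=1$, so $\mathbb{E}_a[\xi_{i,t}]=\mathbb{E}_a[A_{i,t}]+O(\eta)$. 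The discarded term is then $O(\eta^2)$, which is all a first-order statement requires.

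\textbf{The real obstacle.} Clipping is not the load-bearing issue. Without any approximation, the averaged first-order change is $-\eta\sum_a\pi_\theta^k(a|s)^2\Lambda_\theta^k(a|s)\big(\xi_{i,t}(a)-\mathbb{E}_{a'}[\xi_{i,t}(a')]\big)$. The boxed form therefore rests entirely on $\mathbb{E}_{a\sim\pi_\theta(\cdot|s)}[A_{i,t}]=0$. Group normalization gives zero mean only across the $G$ responses to a prompt. It does not give zero conditional mean over next tokens at an intermediate prefix $s$. That is an assumption you inherit from the paper's Remark, not something either argument establishes.
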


\begin{proof}
From Equation~\eqref{eq:entropy_change_expectation}:
\begin{equation}
\small
\begin{aligned}
&\Delta \mathcal{H}(\pi_\theta^k|s) \\
&\approx -\eta\,\mathbb{E}_{a\sim\pi_\theta^k}\Big[ \pi_\theta^k(a\mid s)  \Lambda_\theta^k(a|s)\, \xi_{i,t}(a) \Big] \\
&= -\eta \sum_a \pi_\theta^k(a|s) \cdot \pi_\theta^k(a\mid s) \times \Lambda_\theta^k(a|s)\, \xi_{i,t}(a) \\
&= -\eta \sum_a \pi_\theta^k(a|s)^2   \Lambda_\theta^k(a|s) \cdot \xi_{i,t}(a),
\end{aligned}
\end{equation}
where the second line expands the expectation as $\mathbb{E}_{a\sim\pi}[f(a)] = \sum_a \pi(a|s) \cdot f(a)$, and the third line simplifies the product. Substituting Definition~\ref{def:centered_logprob} yields the boxed form.
\end{proof}

\begin{remark}
This factorization reveals three orthogonal mechanisms governing entropy dynamics: (i) \textit{concentration} amplifies updates for high-probability actions, (ii) \textit{information deviation} directs change based on distance from maximum entropy, and (iii) \textit{update coefficient} modulates the magnitude via clipped importance-weighted advantages. The multiplicative structure implies that entropy change vanishes when any factor approaches zero, providing natural regularization.
\end{remark}

\section{Gradient-Entropy Relationship: Proof of Expected Gradient Norm Upper Bound}
\label{sec:grad_bound}

This appendix establishes the relationship between the score function's squared L2 norm and policy entropy in softmax policies, then extends it to advantage-weighted updates.

\paragraph{Core Result.}
For softmax policy $\pi_\theta(\cdot|s) = (\pi_1, \ldots, \pi_{|\mathcal{V}|})$ with logits $z(s) = (z_1(s), \ldots, z_{|\mathcal{V}|}(s))$ where $\pi_k = \exp(z_k)/\sum_j \exp(z_j)$, the gradient of $\log \pi_k$ with respect to logits is $\frac{\partial \log \pi_k}{\partial z_i} = \delta_{ik} - \pi_i$ (Kronecker delta $\delta_{ik}$). 

The squared norm for action $k$ is:
\begin{equation}
\small
\begin{aligned}
\|\nabla_z \log \pi_k\|^2 
&= \sum_{i=1}^{|\mathcal{V}|} (\delta_{ik} - \pi_i)^2 \\
&= (1 - \pi_k)^2 + \sum_{i \neq k} \pi_i^2 \\
&= 1 - 2\pi_k + \sum_{j=1}^{|\mathcal{V}|} \pi_j^2.
\end{aligned}
\end{equation}

Taking expectation over actions sampled from $\pi_\theta$:
\begin{equation}
\small
\begin{aligned}
&\mathbb{E}_{a_k \sim \pi_\theta(\cdot|s)}\big[\|\nabla_z \log \pi_k\|^2\big] \\
&= \sum_{k=1}^{|\mathcal{V}|} \pi_k \left( 1 - 2\pi_k + \sum_{j=1}^{|\mathcal{V}|} \pi_j^2 \right) \\
&= \sum_{k=1}^{|\mathcal{V}|} \pi_k - 2\sum_{k=1}^{|\mathcal{V}|} \pi_k^2 \\
&\quad + \left(\sum_{j=1}^{|\mathcal{V}|} \pi_j^2\right) \underbrace{\sum_{k=1}^{|\mathcal{V}|} \pi_k}_{=1} \\
&= 1 - \sum_{k=1}^{|\mathcal{V}|} \pi_k^2.
\end{aligned}
\end{equation}

The term $\sum_{k=1}^{|\mathcal{V}|} \pi_k^2$ is the collision probability (probability that two independent samples from $\pi_\theta$ are identical), ranging from $1/|\mathcal{V}|$ (uniform) to $1$ (deterministic).

\paragraph{Entropy Upper Bound.}
To connect collision probability to Shannon entropy, apply Jensen's inequality. Since $x \mapsto \log x$ is concave, $\sum_k \pi_k \log \pi_k \leq \log(\sum_k \pi_k^2)$. Multiplying by $-1$ and exponentiating:
\begin{equation}
\small
\begin{aligned}
\mathcal{H}(\pi_\theta|s) &= -\sum_{k=1}^{|\mathcal{V}|} \pi_k \log \pi_k \\
&\geq -\log\left(\sum_{k=1}^{|\mathcal{V}|} \pi_k^2\right) \\
&\Rightarrow \sum_{k=1}^{|\mathcal{V}|} \pi_k^2 \geq e^{-\mathcal{H}(\pi_\theta|s)}.
\end{aligned}
\end{equation}

Thus $\mathbb{E}_{a_k \sim \pi_\theta(\cdot|s)}[\|\nabla_z \log \pi_k\|^2] = 1 - \sum_{k=1}^{|\mathcal{V}|} \pi_k^2 \leq 1 - e^{-\mathcal{H}(\pi_\theta|s)}$. High entropy ($\mathcal{H}(\pi_\theta|s) \gg 0$) yields $e^{-\mathcal{H}} \approx 0$ and gradient norm $\approx 1$ (vigorous learning); low entropy ($\mathcal{H}(\pi_\theta|s) \approx 0$) yields $e^{-\mathcal{H}} \approx 1$ and gradient norm $\approx 0$ (stable convergence).

\paragraph{Advantage-Weighted Update Magnitude.}
For policy gradient update with learning rate $\eta$ and advantage $A$, the logit update for sampled action $a_k$ is $\Delta z(s) = \eta A (\mathbf{e}_k - \pi_\theta(\cdot|s))$ where $\mathbf{e}_k$ is the one-hot vector. The squared L2 norm is:
\begin{equation}
\small
\begin{aligned}
\|\Delta z(s)\|_2^2 &= \eta^2 A^2 \Big( 1 - 2\pi_k  + \sum_{j=1}^{|\mathcal{V}|} \pi_j^2 \Big).
\end{aligned}
\end{equation}

Taking expectation over $a_k \sim \pi_\theta$ gives $\mathbb{E}[\|\Delta z(s)\|_2^2] = \eta^2 \mathbb{E}[A^2 (1 - 2\pi_k + \sum_j \pi_j^2)]$. Following GRPO's design~\cite{shao2024deepseekmath} where advantages are sequence-level, the dependence between $A$ and per-token probabilities $\pi_k$ is negligible, allowing the approximation:
\begin{equation}
\small
\begin{aligned}
&\mathbb{E}_{a_k \sim \pi_\theta(\cdot|s)}\big[\|\Delta z(s)\|_2^2\big] \\
&= \eta^2 \mathbb{E}[A^2] \left( 1 - \sum_{k=1}^{|\mathcal{V}|} \pi_k^2 \right) \\
&\leq \eta^2 \mathbb{E}[A^2] \Big( 1 - \exp\big(-\mathcal{H}(\pi_\theta|s)\big) \Big).
\end{aligned}
\end{equation}

This combines logit dynamics with advantage-weighted scaling and entropy bounds. High-confidence tokens ($\pi_k \approx 1$) produce small expected update magnitudes ($\sum_j \pi_j^2 \approx 1$), creating gradient attenuation for confident correct actions—the theoretical basis for gradient compensation in our method.

\section{Gradient Variance Minimization for Dynamic Rollout Allocation}
\label{sec:gradient_variance}

This appendix provides the complete derivation of the optimal rollout allocation formula.

\paragraph{Optimal Allocation via Lagrange Multipliers.}
For prompt $q_i$ with $n_i$ rollouts, the gradient estimator $\hat{g}_i = \frac{1}{n_i}\sum_{k=1}^{n_i} g_{i,k}$ (where $g_{i,k} = \frac{1}{G}\sum_{j=1}^G \hat{A}_{j,k} \nabla_\theta \log \pi_\theta(o_{j,k}|q_i)$ with $\hat{A}_{j,k}$ the group-normalized advantage) has variance $\text{Var}[\hat{g}_i] = \sigma_i^2 / n_i$ where $\sigma_i^2 = \mathbb{E}[\|g_{i,k}\|_2^2]$ is the single-sample gradient variance. 

Minimizing total variance $\sum_{i=1}^N \sigma_i^2/n_i$ subject to budget constraint $\sum_i n_i = B$, we construct Lagrangian $\mathcal{L} = \sum_i \sigma_i^2/n_i + \lambda(\sum_i n_i - B)$. The first-order condition $\partial \mathcal{L}/\partial n_i = -\sigma_i^2/n_i^2 + \lambda = 0$ gives $n_i = \sigma_i/\sqrt{\lambda}$. Substituting into the constraint $\sum_i \sigma_i/\sqrt{\lambda} = B$ yields $\sqrt{\lambda} = (\sum_k \sigma_k)/B$, thus:
\begin{equation}
\small
n_i^* = B \cdot \frac{\sigma_i}{\sum_{k=1}^N \sigma_k}.
\end{equation}

\paragraph{Variance Decomposition.}
To implement this principle, we decompose $\sigma_i^2 = \mathbb{E}[\|g_{i,k}\|_2^2]$ where $g_{i,k} = \frac{1}{G}\sum_{j=1}^G A_{j,k} \nabla_\theta \log \pi_\theta(o_{j,k}|q_i)$. The squared norm is:
\begin{equation}
\small
\begin{aligned}
\|g_{i,k}\|_2^2 &= \frac{1}{G^2}\Big\|\sum_{j=1}^G A_{j,k} \times \nabla_\theta \log \pi_\theta(o_{j,k}|q_i)\Big\|_2^2.
\end{aligned}
\end{equation}

Following the treatment in high-dimensional gradient estimation~\cite{schulman2017proximal, yao2025optimizing}, $\mathbb{E}[\langle \nabla_j, \nabla_{j'} \rangle] \approx 0$, yielding:
\begin{equation}
\small
\begin{aligned}
\mathbb{E}[\|g_{i,k}\|_2^2] &\approx \frac{1}{G}\mathbb{E}\Big[A^2  \|\nabla_\theta \log \pi_\theta(o|q_i)\|_2^2\Big].
\end{aligned}
\end{equation}

For GRPO with group-normalized advantages~\cite{shao2024deepseekmath}, the gradient variance exhibits positive correlation with both reward variance and expected gradient magnitude:
\begin{equation}
\small
\begin{aligned}
\mathbb{E}[\|g_{i,k}\|_2^2] &\propto \text{Var}(R)   \mathbb{E}[\|\nabla_\theta \log \pi_\theta(o|q_i)\|_2^2].
\end{aligned}
\end{equation}

For response $o = \{o_1, \ldots, o_T\}$, the policy gradient decomposes as $\nabla_\theta \log \pi_\theta(o|q_i) = \sum_{t=1}^T \nabla_\theta \log \pi_\theta(o_t|q_i, o_{<t})$. Applying the same approximation to token-level gradients:
\begin{equation}
\small
\begin{aligned}
&\mathbb{E}[\|\nabla_\theta \log \pi_\theta(o|q_i)\|_2^2] \approx \mathbb{E}\Big[\sum_{t=1}^T \|\nabla_\theta \log \pi_\theta(o_t|q_i, o_{<t})\|_2^2\Big].
\end{aligned}
\end{equation}

From Appendix~\ref{sec:grad_bound}, the expected squared gradient norm for token $t$ is $\mathbb{E}[\|\nabla_\theta \log \pi_\theta(o_t|q_i, o_{<t})\|_2^2] = 1 - C(P_t)$ where $C(P_t) = \sum_{k=1}^{|\mathcal{V}|} \pi_k^2(q_i, o_{<t})$ is the collision probability. Defining average collision $\bar{C}_i = \mathbb{E}_{o \sim \pi_\theta(\cdot|q_i)}[\frac{1}{|o|}\sum_{t=1}^{|o|} C(P_t)]$:

\begin{equation}
\small
\sigma_i^2 \propto \text{Var}(R) \cdot (1 - \bar{C}_i).
\end{equation}

Computing $C(P_t) = \sum_{k=1}^{|\mathcal{V}|} \pi_k^2$ for all tokens is expensive. Since high entropy corresponds to low collision probability (uniform distributions have low $C(P)$, concentrated distributions have high $C(P)$), we use entropy as a proxy. Defining average per-token entropy $\bar{\mathcal{H}}_i = \mathbb{E}_{o \sim \pi_\theta(\cdot|q_i)}\left[\frac{1}{|o|}\sum_{t=1}^{|o|} \mathcal{H}(\pi_\theta|q_i, o_{<t})\right]$, we approximate:
\begin{equation}
\small
\sigma_i \propto \text{std}(R) \cdot f(\bar{\mathcal{H}}_i)
\end{equation}
where $f(\cdot)$ is an increasing function. Using $f(\mathcal{H}) = \sqrt{1 - e^{-\mathcal{H}}}$ captures the relationship: higher entropy leads to larger gradient variance.

\paragraph{Connection to Bernoulli Variance.}
For binary rewards, $\text{Var}(R) = p(1-p)$ where $p$ is success probability. The unbiased estimator using historical statistics ($k_i$ correct responses out of $G_i$ total rollouts) is:
\begin{equation}
\small
\begin{aligned}
P_i = \frac{k_i(G_i - k_i)}{G_i(G_i - 1)}, \quad \mathbb{E}[P_i] = p_i(1-p_i).
\end{aligned}
\end{equation}

This $P_i$ serves as a proxy for $\sigma_i$ because: (i) it directly estimates $\text{Var}(R)$ through Bernoulli variance, (ii) empirical accuracy $\hat{p}_i = k_i/G_i$ near the variance-maximizing value correlates with high policy entropy (models actively exploring multiple solution paths), and (iii) together, $P_i$ identifies high-$\sigma_i$ problems per the variance decomposition. The water-level allocation algorithm implements $n_i^* \propto P_i$ with constraints $G_{\min}, G_{\max}$.

\paragraph{Variance Reduction Guarantee.}
The optimal allocation achieves total variance $\text{Var}^* = \frac{1}{B}(\sum_i \sigma_i)^2$, while uniform allocation ($n_i = B/N$) yields $\text{Var}_{\text{uniform}} = \frac{N}{B}\sum_i \sigma_i^2$. Their ratio is:
\begin{equation}
\small
\frac{\text{Var}^*}{\text{Var}_{\text{uniform}}} = \frac{(\sum_i \sigma_i)^2}{N \sum_i \sigma_i^2} \leq 1,
\end{equation}
where the inequality follows from Cauchy-Schwarz, with equality only when all $\sigma_i$ are equal. When gradient variances are heterogeneous across problems (typical in RLVR), this adaptive allocation provides substantial variance reduction.

\section{Dynamic Rollout Allocation: Operational Details.}
\label{sec:DRA_detail}

Our allocation mechanism determines rollout quantities for each prompt in the training batch, operating independently of prompt selection strategies (e.g., random sampling, curriculum learning). Given a batch of $N$ prompts and total rollout budget $B$, the water-level algorithm distributes $n_i$ rollouts per prompt proportionally to variance proxies $P_i$ while respecting minimum and maximum constraints.

The allocation process guarantees complete budget utilization through iterative refinement. Initially, each prompt receives a baseline allocation based on its $P_i$ value. When constraints become active (some prompts hit minimum or maximum bounds), residual budget is redistributed among unconstrained prompts using the same proportional rule. This continues until all budget is assigned and $\sum_i n_i = B$ holds exactly. If the total minimum requirement exceeds available budget, we proportionally scale down the minimum allocation to ensure feasibility.

For prompts with insufficient historical data (few prior observations), computing reliable variance proxies $P_i$ is infeasible. In such cases, we apply uniform allocation as a fallback strategy, distributing rollouts equally among data-scarce prompts. As prompts accumulate responses through repeated selection in training batches, their historical statistics become sufficient to support adaptive allocation. This transition from uniform to variance-driven distribution occurs naturally as the $P_i$ estimates stabilize with increased sample counts.

The proportional allocation naturally adapts to estimation errors. If historical $P_i$ overestimates current variance (due to policy improvement), subsequent rollouts reveal lower empirical variance, which updates $P_i$ for future allocations. Conversely, underestimated prompts exhibit higher gradient variance, triggering increased allocation in later iterations. This feedback loop ensures the distribution tracks evolving problem difficulty without manual tuning. In boundary cases where all $P_i$ values become negligibly small (indicating near-deterministic outcomes), the algorithm defaults to uniform distribution to maintain numerical stability.

\begin{figure*}[!ht]
\subfigure[Qwen2.5-Math-1.5B]{
\begin{minipage}[t]{0.48\linewidth}
\centerline{\includegraphics[width=1\hsize]{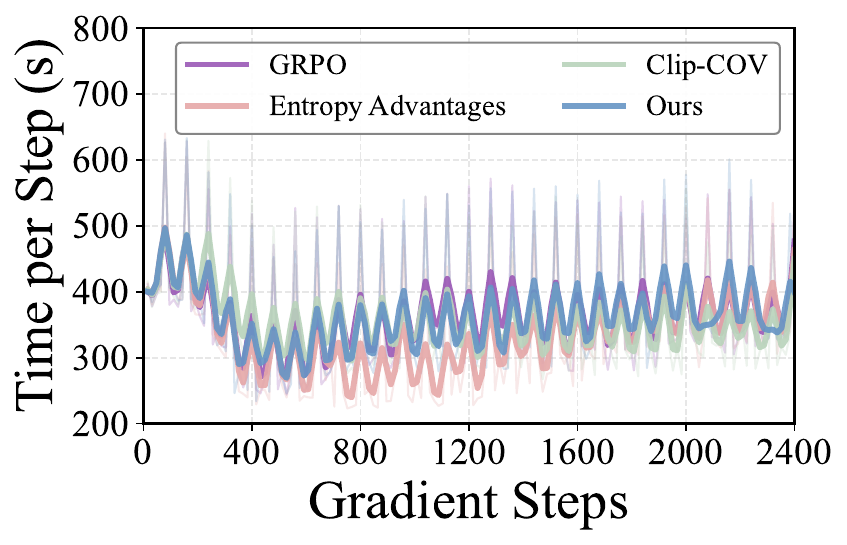}}
\label{fig:timing_1.5b}
\end{minipage}}
\subfigure[Qwen2.5-Math-7B]{
\begin{minipage}[t]{0.48\linewidth}
\centerline{\includegraphics[width=1\hsize]{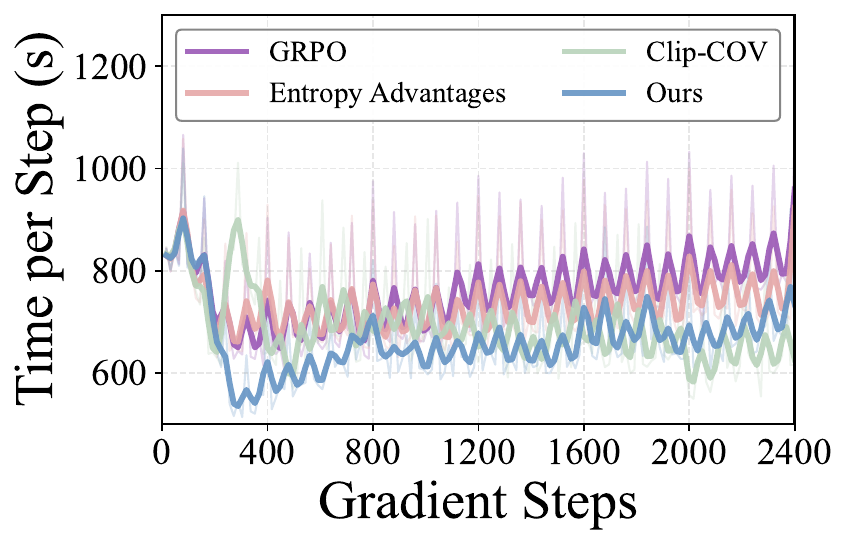}}
\label{fig:timing_7b}
\end{minipage}}
\caption{Per-step training time on Qwen2.5-Math models. Bold lines: smoothed; light lines: raw measurements.}
\label{fig:timing}
\end{figure*}

\section{Detailed Training Configuration}
\label{sec:training_config}

All experiments are conducted on 8× NVIDIA A100 80GB GPUs per node (1-2 nodes depending on model scale) with mixed precision training (bfloat16) and FSDP for distributed training. The actor model uses Tensor Parallelism (TP=2) during rollout generation with vLLM for efficient inference. Training prompts are left-truncated to 2048 tokens if exceeding the maximum length, and we apply Math-Verify~\cite{mathverify} for answer verification during training and evaluation. The training dataset is shuffled at the beginning of each epoch, and rollout budget allocation is updated dynamically based on historical success statistics accumulated across iterations. Table~\ref{tab:training_config} presents the complete hyperparameters and configuration details for our experiments.

\begin{table*}[h]
\centering

\setlength{\tabcolsep}{10pt}
\renewcommand{\arraystretch}{1.3}
\caption{Detailed training configuration for DynaMO experiments.}
\label{tab:training_config}
\begin{tabular}{lll}
\toprule
\textbf{Category} & \textbf{Parameter} & \textbf{Value} \\
\midrule
\multicolumn{3}{c}{\textit{\textbf{Training Hyperparameters}}} \\
\midrule
Batch Configuration & Generation Batch Size & 512 \\
 & Update Batch Size & 32 \\
 & PPO Mini-batch Size & 32 \\
Rollout Allocation & Average Rollouts per Prompt & 16 \\
 & Dynamic Range & [8, 24] \\
Optimization & Learning Rate (Actor) & $1 \times 10^{-6}$ \\
 & Learning Rate (Critic) & $1 \times 10^{-5}$ \\
 & Weight Decay & 0.1 / 0.01 \\
 & Gradient Clipping & 1.0 \\
 & Warmup Steps & 10 \\
 & Warmup Style & constant \\
\midrule
\multicolumn{3}{c}{\textit{\textbf{GRPO-Specific Settings}}} \\
\midrule
Clipping & Clip Ratio & 0.2 \\
Entropy & Entropy Coefficient & 0 \\
KL Penalty & KL Coefficient & 0.0 \\
Advantage Estimator & Type & GRPO  \\
 & Normalize by Std & True \\
 & Gamma (Discount Factor) & 1.0 \\
 & Lambda (GAE) & 1.0 \\
\midrule
\multicolumn{3}{c}{\textit{\textbf{Inference \& Rollout}}} \\
\midrule
Sampling Strategy & Temperature & 1.0 \\
 & Top-p & 1.0 \\
 & Top-k & -1 (disabled) \\
Sequence Lengths & Max Prompt Length & 2048 \\
 & Max Response Length & 8192 \\
Rollout Engine & Backend & vLLM \\
 & Tensor Parallel Size & 2 \\
 & GPU Memory Utilization & 0.8 \\
 & Max Num Sequences & 1024 \\
 & Max Batched Tokens & 10240 \\
\bottomrule
\end{tabular}
\end{table*}

\section{Detailed Description of Benchmarks}
\label{sec:benchmark_details}

To comprehensively evaluate mathematical reasoning capabilities, we employ six widely-adopted benchmarks spanning competition-level challenges (AIME, AMC), curriculum-aligned assessments (MATH), and specialized STEM domains (Minerva, OlympiadBench). These benchmarks encompass diverse mathematical subfields and problem formats, enabling rigorous evaluation across multiple dimensions of reasoning proficiency. Table~\ref{tab:benchmark_details} summarizes the key characteristics of each benchmark.

\begin{table*}[htbp]
\centering
\renewcommand{\arraystretch}{1.8}
\resizebox{\textwidth}{!}{
\begin{tabular}{@{} l p{5cm} p{7.5cm} @{}}
\toprule
\textbf{Benchmark} & \textbf{Core Description} & \textbf{Key Characteristics} \\
\midrule

\textbf{AIME} & 
American Invitational Mathematics Examination - high school competition & 
\begin{itemize}[leftmargin=*,nosep,itemsep=2pt]
\item 15 challenging problems per round
\item Integer answers (0-999)
\item Algebra, Geometry, Number Theory, Combinatorics
\item Multi-step reasoning required
\item Top AMC performers participate
\end{itemize} \\
\addlinespace[6pt]

\midrule

\textbf{AMC} & 
American Mathematics Competitions - tiered assessment system & 
\begin{itemize}[leftmargin=*,nosep,itemsep=2pt]
\item Three tiers (AMC 8/10/12) for different grades
\item 25 multiple-choice problems per tier
\item Curriculum-aligned design
\item Comprehensive secondary mathematics coverage
\item Standardized difficulty progression
\end{itemize} \\
\addlinespace[6pt]

\midrule

\textbf{MATH-500} & 
Curated subset from MATH dataset & 
\begin{itemize}[leftmargin=*,nosep,itemsep=2pt]
\item 500 problems from comprehensive collection
\item Seven domains (Algebra, Geometry, Number Theory, etc.)
\item Five difficulty levels
\item Formal mathematical reasoning
\item Step-by-step solution verification
\end{itemize} \\
\addlinespace[6pt]

\midrule

\textbf{Minerva Math} & 
Technical STEM benchmark & 
\begin{itemize}[leftmargin=*,nosep,itemsep=2pt]
\item Undergraduate to graduate difficulty
\item Physics, Chemistry, Biology applications
\item Symbolic manipulation and formula derivation
\item Domain-specific knowledge integration
\item Quantitative problem-solving
\end{itemize} \\
\addlinespace[6pt]

\midrule

\textbf{OlympiadBench} & 
Bilingual Olympiad-level benchmark & 
\begin{itemize}[leftmargin=*,nosep,itemsep=2pt]
\item Large-scale competition problem collection
\item Bilingual (English and Chinese)
\item Mathematics, Physics, Chemistry, Biology
\item Theorem-proving and open-ended problems
\item Multimodal inputs (text, diagrams, equations)
\end{itemize} \\

\bottomrule
\end{tabular}
}
\caption{Characteristics of mathematical reasoning benchmarks used in our evaluation. These benchmarks provide comprehensive coverage across difficulty levels (secondary to graduate), problem formats (multiple-choice, integer answers, open-ended), and mathematical domains (pure and applied mathematics), enabling thorough assessment of reasoning capabilities.}
\label{tab:benchmark_details}
\end{table*}

\section{Efficiency Analysis}

Figure~\ref{fig:timing} presents per-step training time across model scales. DynaMO maintains comparable efficiency to baselines on both 1.5B and 7B models, as the additional computations—Bernoulli variance estimation, water-level allocation, and token-level modulation—involve only lightweight arithmetic operations on existing logits and statistics, incurring negligible overhead relative to model inference.

\end{document}